\newcommand{\dbtilde}[1]{\accentset{\approx}{#1}}
\setlist{leftmargin=*,noitemsep}
\newcommand{\wt}{\widetilde}
\newtcbox{\mymath}[1][]{%
    nobeforeafter, math upper, tcbox raise base,
    enhanced, colframe=blue!30!black,
    colback=blue!30, boxrule=1pt,
    #1}
\def\hy{\widehat{y}}
\def\hf{\widehat{f}}
\title{Post-processing for Individual Fairness}
\author{%
  ~Felix Petersen${}^*$ \\
  University of Konstanz\\
  \texttt{~~felix.petersen@uni.kn~~}\\
  \And
  ~Debarghya Mukherjee${}^*$ \\
  University of Michigan\\
  ~\texttt{~~~~~mdeb@umich.edu~~~~~}~\\
  \AND
  Yuekai Sun \\
  University of Michigan\\
  ~\texttt{~~~~yuekai@umich.edu~~~~}~\\
  \And
  Mikhail Yurochkin \\
  \kern-2em IBM Research, MIT-IBM Watson AI Lab \kern-2em\\
  \texttt{mikhail.yurochkin@ibm.com}\\
}
\begin{document}

\maketitle

\begin{abstract}
Post-processing in algorithmic fairness is a versatile approach for correcting bias in ML systems that are already used in production. The main appeal of post-processing is that it avoids expensive retraining. In this work, we propose general post-processing algorithms for individual fairness (IF). We consider a setting where the learner only has access to the predictions of the original model and a similarity graph between individuals, guiding the desired fairness constraints. We cast the IF post-processing problem as a graph smoothing problem corresponding to graph Laplacian regularization that preserves the desired ``treat similar individuals similarly'' interpretation. Our theoretical results demonstrate the connection of the new objective function to a local relaxation of the original individual fairness. Empirically, our post-processing algorithms correct individual biases in large-scale NLP models such as BERT, while preserving accuracy.
\end{abstract}

\renewcommand*{\thefootnote}{\fnsymbol{footnote}}
\footnotetext[1]{Equal Contribution.}
\renewcommand*{\thefootnote}{\arabic{footnote}}
\setcounter{footnote}{0}

\section{Introduction}
\label{sec:intro}

There are many instances of algorithmic bias in machine learning (ML) models \cite{barocas2016big,larson2017we,buolamwini2018gender,bender2021dangers}, which has led to the development of methods for \emph{quantifying} and \emph{correcting} algorithmic bias. To quantify algorithmic bias, researchers have proposed numerous mathematical definitions of algorithmic fairness. 
Broadly speaking, these definitions fall into two categories: \emph{group fairness} \cite{chouldechova2020snapshot} and \emph{individual fairness} \cite{dwork2012fairness}. 
The former formalizes the idea that ML system should treat certain \emph{groups} of individuals similarly, e.g., requiring the average loan approval rate for applicants of different ethnicities be similar \cite{hsia1978credit}. 
The latter asks for similar treatment of similar \emph{individuals}, e.g., same outcome for applicants with resumes that differ only in names \cite{bertrand2004emily}. 
Researchers have also developed many ways of correcting algorithmic bias. 
These fairness interventions broadly fall into three categories: pre-processing the data, enforcing fairness during model training (also known as in-processing), and post-processing the outputs of a model.

While both group and individual fairness (IF) definitions have their benefits and drawbacks \cite{dwork2012fairness,chouldechova2020snapshot,fleisher2021s}, the existing suite of algorithmic fairness solutions mostly enforces group fairness.
The few prior works on individual fairness are all in-processing methods \cite{jung2019algorithmic,yurochkin2020training,yurochkin2021sensei,vargo2021individually}. 
Although in-processing is arguably the most-effective type of intervention, it has many practical limitations. 
For example, it requires training models from scratch.
Nowadays, it is more common to fine-tune publicly available models (e.g., language models such as BERT \cite{devlin2018BERT} and GPT-3 \cite{brown2020Language}) than to train models afresh, as many practitioners do not have the necessary computational resources. 
Even with enough computational resources, training large deep learning models has a significant environmental impact \cite{strubell2019energy,bender2021dangers}.
Post-processing offers an easier path towards incorporating algorithmic fairness into deployed ML models, and has potential to reduce environmental harm from re-training with in-processing fairness techniques.

In this paper, we propose a computationally efficient method for post-processing off-the-shelf models to be \emph{individually fair}.
We consider a setting where we are given the outputs of a (possibly unfair) ML model on a set of $n$ individuals, and side information about their similarity for the ML task at hand, which can either be obtained using a fair metric on the input space or from external (e.g., human) annotations. 
Our starting point is a post-processing version of the algorithm by Dwork~\textit{et al.}~\cite{dwork2012fairness} (see \eqref{eq:IF-projection}). 
Unfortunately, this method has two drawbacks: poor scalability and an unfavorable trade-off with accuracy. 
As we shall see, the sharp trade-off is due to the restrictions imposed on dissimilar individuals by Dwork~\textit{et al.}~\cite{dwork2012fairness}'s global Lipschitz continuity condition. 
By relaxing these restrictions on dissimilar individuals, we obtain a better trade-off between accuracy and fairness, while preserving the intuition of treating \emph{similar} individuals similarly. 
This leads us to consider a graph signal-processing approach to IF post-processing that only enforces similar outputs between similar individuals. 
The nodes in the underlying graph correspond to individuals, edges (possibly weighted) indicate similarity, and the signal on the graph is the output of the model on the corresponding node-individuals. 
To enforce IF, we use Laplacian regularization \cite{chapelle2006Semisupervised}, which encourages the signal to
\begin{wrapfigure}{r}{.49\textwidth}
    \centering
\hfill\includegraphics[width=.9\linewidth]{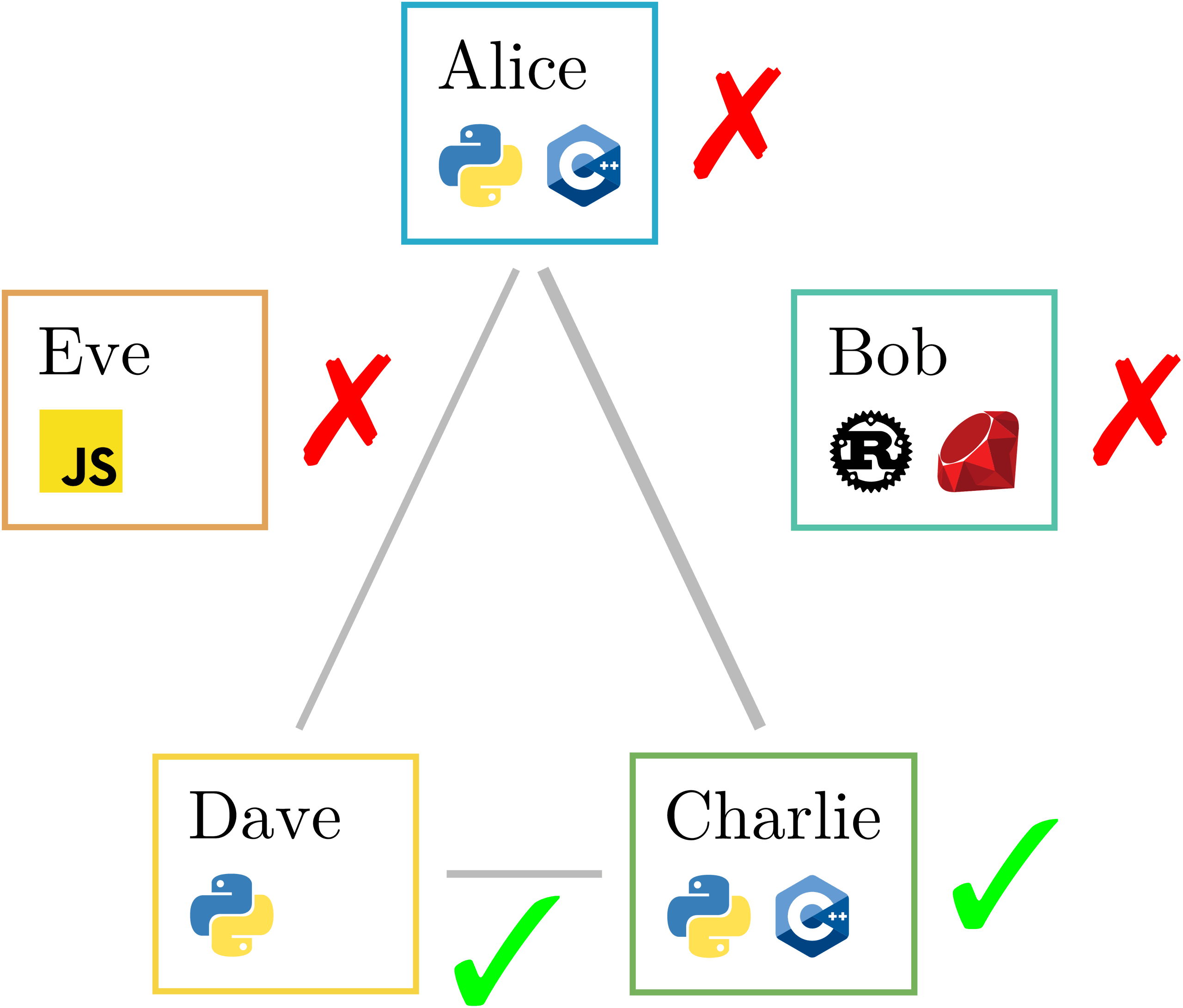}\;
    \caption{IF on a graph.}
    \label{fig:schematic}
\end{wrapfigure}
be smooth on the graph. 
We illustrate this idea in Figure~\ref{fig:schematic}:
a biased model decides whom to show a job ad for a Python programming job based on their CVs and chooses Charlie and Dave but excludes Alice. 
However, from the qualifications, we can see that Alice, Charlie and Dave are similar because they all have experience in Python, which is the job requirement, and thus should be treated similarly.
We represent all five candidates as nodes in a graph, where the node signal (checkmark or cross) is the model's decision for the corresponding candidate, and the edge weights are indicated by the thickness of the connecting line. 
Alice and Charlie have the same qualifications and are therefore connected with a large edge-weight.
For the predictions to satisfy IF, the graph needs to be smooth, i.e., the similar / connected candidates should have similar node signals, which can be accomplished by also offering the job to Alice.
In contrast, directly enforcing IF constraints \cite{dwork2012fairness} requires a certain degree of output similarity on \emph{all} pairs of candidates, and not just on those which are connected and thus similar. 
Our main contributions are summarized below.

\begin{enumerate}
    \item We cast post-processing for individual fairness as a graph smoothing problem and propose a coordinate descent algorithm to scale the approach to large data sets.
    \item We demonstrate theoretically and verify empirically that graph smoothing enforces individual fairness constraints \emph{locally}, i.e., it guarantees similar treatment of \emph{similar} individuals.
    \item We empirically compare the Laplacian smoothing method to the post-processing adaptation of the algorithm by Dwork~\textit{et al.}~\cite{dwork2012fairness} enforcing global Lipschitz continuity. The Laplacian smoothing method is not only computationally more efficient but is also more effective in reducing algorithmic bias while preserving accuracy of the original model.
    \item We demonstrate the efficacy of Laplacian smoothing on two large-scale text data sets by reducing biases in fine-tuned BERT models.
\end{enumerate}

\section{Post-processing Problem Formulation}
\label{sec:setup}

Let $\cX$ be the feature space, $\cY$ be the set of possible labels/targets, and $h:\cX\to\cY$ be a (possibly unfair) ML model trained for the task. 
Our goal is to post-process the outputs of $h$ so that they are individually fair. 
Formally, the post-processor is provided with a set of inputs $\{x_i\}_{i=1}^n$ and the outputs of $h$ on the inputs $\{\hy_i \triangleq h(x_i)\}_{i=1}^n$, and its goal is to produce $\{\hf_i\}_{i=1}^n$ that is both individually fair and similar to the $\hy_i$'s. 
Recall that individual fairness of $h$ is the Lipschitz continuity of $h$ with respect to a fair metric $d_{\cX}$ on the input space:
\begin{equation}
d_{\cY}(h(x),h(x')) \le L\,d_{\cX}(x,x')\ \text{ for all }x,x'\in\cX,
\label{eq:individual-fairness}
\end{equation}
where $L > 0$ is a Lipschitz constant. 
The fair metric encodes problem-specific intuition of which samples should be treated similarly by the ML model. 
It is analogous to the knowledge of protected attributes in group fairness needed to define corresponding fairness constraints. 
Recent literature proposes several practical methods for learning fair metric from data \cite{ilvento2020Metric,mukherjee2020Two}. 
We assume the post-processor is either given access to the fair metric (it can evaluate the fair distance on any pair of points in $\cX$), or receives feedback on which inputs should be treated similarly. 
We encode this information in an adjacency matrix $W\in\reals^{n\times n}$ of a graph with individuals as nodes. 
If the post-processor is given the fair metric, then the entries of $W$ are 
\begin{equation}
\label{eq:similarity}
W_{ij} = \begin{cases}\exp(- \theta d_{\cX}(x_i,x_j)^2) & d_{\cX}(x_i,x_j) \le \tau \\ 0 & \text{otherwise,}\end{cases}
\end{equation}
where $\theta > 0$ is a scale parameter and $\tau > 0$ is a threshold parameter. If the post-processor is given an annotator's feedback, then $W$ is a binary matrix with $W_{ij}=1$ if $i$ and $j$ are considered to be treated similarly by the annotator and $0$ otherwise. Extensions to multiple annotators are straightforward.

We start with a simple post-processing adaptation of the algorithm by Dwork~\textit{et al.}~\cite{dwork2012fairness} for enforcing individual fairness, that projects the (possibly unfair) outputs of $h$ onto a constraint set to enforce \eqref{eq:individual-fairness}. 
In other words, the post-processor seeks the closest set of outputs to the $\hy_i$'s that satisfies individual fairness:
\vspace{-.25em}
\begin{equation}
\{\hf_i\}_{i=1}^n \in \left\{\begin{aligned}&\argmin_{f_1,\dots,f_n} & &\textstyle \sum_{i=1}^n\frac12d_{\cY}(f_i,\hy_i)^2 \\
& \subjectto & & d_{\cY}(f_i,f_j) \le Ld_{\cX}(x_i,x_j)\end{aligned}\right\}.
\label{eq:IF-projection}
\end{equation}
This objective function, though convex, scales poorly due to the order of $n^2$ constraints. 
Empirically, we observe that \eqref{eq:IF-projection} leads to post-processed outputs that are dissimilar to the $\hy_i$'s, leading to poor performance in practice. 
The goal of our method is to improve performance and scalability, while preserving the IF desiderata of treating \emph{similar} individual similarly. 
Before presenting our method, we discuss other post-processing perspectives that differ in their applicability and input requirements.

\subsection{Alternative Post-processing Formulations}
\label{sec:review-pp}

We review three post-processing problem setups and the corresponding methods in the literature. 
First, one can fine-tune a model via an in-processing algorithm to reduce algorithmic biases. 
Yurochkin~\textit{et al.}~\cite{yurochkin2021sensei} proposed an in-processing algorithm for IF and used it to train fair models for text classification using sentence BERT embeddings. 
This setting is the most demanding in terms of input and computational requirements: 
a user needs access to the original model parameters, fair metric function, and train a predictor, e.g., a moderately deep fully connected neural network, with a non-trivial fairness-promoting objective function.

Second, it is possible to post-process by training additional models to correct the initial model's behavior. 
For example, Kim~\textit{et al.}~\cite{kim2019multiaccuracy} propose a boosting-based method for group fairness post-processing. 
This perspective can be adapted to individual fairness; however, it implicitly assumes that we can train weak-learners to boost. 
Lohia~\textit{et al.}~\cite{lohia2019bias,lohia2021priority} propose to train a bias detector to post-process for group fairness and a special, group based, notion of individual fairness. 
Such methods are challenging to apply to text data or other non-tabular data types.

The third perspective is the most generic: a user has access to original model outputs only, and a minimal additional feedback guiding fairness constraints. 
Wei~\textit{et al.}~\cite{wei2020optimized} consider such setting and propose a method to satisfy group fairness constraints; however, it is not applicable to individual fairness. 
Our problem formulation belongs to this post-processing setup. 
The main benefit of this approach is its broad applicability and ease of deployment.

\section{Graph Laplacian Individual Fairness}
\label{sec:glif}
To formulate our method, we cast IF post-processing as a graph smoothing problem. 
Using the fair metric or human annotations as discussed in Section~\ref{sec:setup}, we obtain an $n \times n$ matrix $W$ that we treat as an adjacency matrix. 
As elaborated earlier, the goal of post-processing is to obtain a model $f$ that is individually fair and accurate. 
The accuracy is achieved by minimizing the distance between the outputs of $f$ and $h$, a pre-trained model assumed to be accurate but possibly biased. 
Recall that we do not have access to the parameters of $h$, but can evaluate its predictions. 
Our method enforces fairness using a graph Laplacian quadratic form \cite{spielman2012spectral} regularizer:
\begin{equation}
    \label{eq:opt_lagrangian}
    \widehat{\mathbf{f}} = \arg \min_{\mathbf{f}} \ g_\lambda (\mathbf{f}) =  \arg \min_{\mathbf{f}} \ \|\mathbf{f} - \hat \by\|_2^2 + \lambda \ \mathbf{f}^{\top}\bbL_n \mathbf{f},
\end{equation}
where $\hat \by$ is the output of the model $h$, and $\widehat{\mathbf{f}}$ is the vector of the post-processed outputs, i.e., $\widehat{f}_i = f(x_i)$ for $i=1,\dots,n$. The matrix $\bbL_n \in \reals^{n \times n}$ is called \emph{graph Laplacian} matrix and is a function of $W$. 
There are multiple versions of $\bbL_n$ popularized in the graph literature (see, e.g.,~\cite{hein2007graph} or~\cite{coifman2006diffusion}). 
To elucidate the connection to individual fairness, consider the unnormalized Laplacian $\bbL_{un, n} = D - W$, where $D_{ii} = \sum_{j=1}^n W_{ij}$, $D_{ij} = 0$ for $i \neq j$ is the \emph{degree matrix} corresponding to $W$. 
Then a known identity is:
\begin{equation}\textstyle
\label{eq:laplacian}
    \mathbf{f}^{\top}\bbL_{un, n} \mathbf{f} = \frac{1}{2} \sum_{i \neq j }W_{ij} \left(f_i - f_j\right)^2.
\end{equation}
Hence, the Laplacian regularizer is small if the post-processed model outputs $\widehat{f}_i$ and $\widehat{f}_j$ (i.e., treatment) are similar for large $W_{ij}$ (i.e., for similar individuals $i$ and $j$).
This promotes the philosophy of individual fairness: ``treat similar individuals similarly''. 
This observation intuitively explains the motivation for minimizing the graph Laplacian quadratic form to achieve IF. 
In Section~\ref{sec:theory}, we present a more formal discussion on the connections between the graph Laplacian regularization and IF.

Our post-processing problem~\eqref{eq:opt_lagrangian} is easy to solve: setting the gradient of $g_\lambda$ to $0$ implies that the optimal solution $\widehat{\mathbf{f}}$ is: 
\vspace{-.5em}
\begin{equation}
    \label{eq:lap-solve}
    \widehat{\mathbf{f}} = \left(I + \lambda \left(\frac{\bbL_n + \bbL_n^{\top}}{2}\right)\right)^{-1}\widehat{\by} \,.
\end{equation}
The Laplacian $\bbL_n$ is a positive semi-definite matrix ensuring that~\eqref{eq:opt_lagrangian} is strongly convex and that~\eqref{eq:lap-solve} is a global minimum. 
In comparison to the computationally expensive constraint optimization problem~\eqref{eq:IF-projection}, this approach has a simple closed-form expression.

Note that the symmetry of the unnormalized Laplacian $\bbL_{un, n}$ simplifies~\eqref{eq:lap-solve}; however, there are also non-symmetric Laplacian variations. 
In this work, we also consider the normalized random walk Laplacian $\bbL_{nrw, n} = (I - \wt D^{-1}\wt W)$, where  $\wt W = D^{-1/2}WD^{-1/2}$ is the normalized adjacency matrix and $\wt D$ is its degree matrix. 
We discuss its properties in the context of IF in Section~\ref{sec:theory}. 
Henceforth, we refer to our method as Graph Laplacian Individual Fairness (GLIF) when using the unnormalized Laplacian, and GLIF-NRW when using Normalized Random Walk Laplacian.

\subsection{Prior Work on Graph Laplacians} 
Graph-based learning via a similarity matrix is prevalent in statistics and ML literature, specifically, in semi-supervised learning. 
The core idea is to gather information from similar unlabeled inputs to improve prediction accuracy (e.g., see \cite{zhou2004regularization}, \cite{belkin2004regularization}, \cite{smola2003Kernels} and references therein). 
Laplacian regularization is widely used in science engineering. 
We refer to Chapelle~\cite{chapelle2006Semisupervised} for a survey. 

We note that \cite{lahoti2019ifair, kang2020inform} also use graph Laplacian regularizers to enforce individual fairness. 
Our work builds on their work by elucidating the key role played by the graph Laplacian in enforcing individual fairness. 
In particular, we clarify the connection between the choice of the graph Laplacian and the exact notion of individual fairness the corresponding graph Laplacian regularizer enforces.

\subsection{Extensions of the Basic Method}

In this subsection, we present four extensions of our method: multi-dimensional outputs, coordinate descent for large-scale data, an inductive setting, and alternative output space discrepancy measures.
\subsubsection{Multi-dimensional Output} 
We presented our objective function~\eqref{eq:opt_lagrangian} and post-processing procedure~\eqref{eq:lap-solve} for the case of univariate outputs. 
This covers regression and binary classification. 
Our method readily extends to multi-dimensional output space, for example, in classification, $f_i, \hy_i \in \reals^K$ can represent logits, i.e., softmax inputs, of the $K$ classes. 
In this case, $\mathbf{f}$ and $\hat \by$ are $n \times K$ matrices, and the term $\mathbf{f}^{\top}\bbL_n \mathbf{f}$ is a $K \times K$ matrix. 
We use the trace of it as a regularizer. 
The optimization problem~\eqref{eq:opt_lagrangian} then becomes: 
\begin{equation}%
    \label{eq:opt_lagrangian_multi}
    \widehat{\mathbf{f}} = \arg \min_{f} \ g_\lambda (\mathbf{f}) =  \arg \min_{f} \ \|\mathbf{f} - \hat \by\|_F^2 + \lambda \ \tr\left(\mathbf{f}^{\top}\bbL_n \mathbf{f}\right),
\end{equation}
where $\|\cdot\|_F$ is the Frobenius norm. 
Similar to the univariate output case, this yields: 
\begin{equation}%
    \label{eq:lap-solve-multi}
    \widehat{\mathbf{f}} = \left(I + \lambda \left(\frac{\bbL_n + \bbL_n^{\top}}{2}\right)\right)^{-1}\widehat{\by} \,.
\end{equation}
The solution is the same as~\eqref{eq:lap-solve}; however, now it accounts for multi-dimensional outputs.

\subsubsection{Coordinate Descent for Large-Scale Data}%
\label{sec:coordinate-descent}
Although our method has a closed form solution, it is not immediately scalable, as we have to invert a $n \times n$ matrix to obtain the optimal solution. 
We propose a \emph{coordinate descent} variant of our method that readily scales to any data size. 
The idea stems primarily from the gradient of equation~\eqref{eq:opt_lagrangian_multi}, where we solve:
\begin{equation}
    \label{eq:grad}
    \mathbf{f} - \widehat{\mathbf{y}} + \lambda \frac{\bbL_n + \bbL_n^{\top}}{2} \mathbf{f} = 0 \,.
\end{equation}
Fixing $\{f_j\}_{j \neq i}$, we can solve \eqref{eq:grad} for $f_i$:
\begin{equation}
    \label{eq:coo_i}
    f_i \leftarrow \frac{\hat y_i - \frac{\lambda}{2} \sum_{j \neq i}(\bbL_{n, ij} + \bbL_{n, ji})f_j}{1 + \lambda \bbL_{n, ii}}.
\end{equation}
This gives rise to the coordinate descent algorithm. 
We perform asynchronous updates over randomly selected coordinate batches until convergence. 
We refer the reader to Wright~\cite{wright2015Coordinate} and the references therein for the convergence properties of (asynchronous) coordinate descent.

\subsubsection{Extension to the Inductive Setting} 
This coordinate descent update is key to extending our approach to the inductive setting. 
To handle new unseen points, we assume we have a set of test points on which we have already post-processed the outputs of the ML model. 
To post-process new unseen points, we simply fix the outputs of the other test points and perform a single coordinate descent step with respect to the output of the new point.
Similar strategies are often employed to extend transductive graph-based algorithms to the inductive setting \cite{chapelle2006Semisupervised}.

\subsubsection{Alternative Discrepancy Measures on the Output Space} 
So far, we have considered the squared Euclidean distance as a measure of discrepancy between outputs. 
This is a natural choice for post-processing models with continuous-valued outputs. 
For models that output a probability distribution over the possible classes, we consider alternative discrepancy measures on the output space. 
It is possible to replace the squared Euclidean distance with a Bregman divergence with very little change to the algorithm in the case of the unnormalized Laplacian. 
Below, we work through the details for the KL divergence as a demonstration of the idea. 
A result for the general Bregman divergence can be found in Appendix~\ref{sec:general_Bregman} (see Theorem~\ref{thm:gen_Breg}). 

Suppose the output of the pre-trained model $h$ is $\hat y_i \in \Delta^K$, where $\hat y_i = \{e^{o_{i, j}}/\sum_{k=1}^K e^{o_{i, k}}\}_{j=1}^K$ a $K$-dimensional probability vector corresponding to a $K$ class classification problem ($\{o_{i, j}\}$ is the output of the penultimate layer of the pre-trained model and $\hat y_i$ is obtained by passing it through softmax) and $\Delta^K = \{x \in \reals^K: x_i \ge 0, \sum_{i=1}^K x_i = 1\}$ is the probability simplex in $\reals^K$. 
Let $P_v$ denote the multinomial distribution with success probabilities $v$ for any $v \in \Delta^k$. 
Define $\hat \eta_i \in \reals^{K-1}$ (resp. $\eta_i$) as the natural parameter corresponding to $\hat y_i$ (resp. $f_i$), i.e., $\hat \eta_{i, j} = \log{(\hat y_{i, j}/\hat y_{i ,K})} = o_{i, j} - o_{i, K}$ for $1 \le j \le K-1$. 
The (unnormalized) Laplacian smoothing problem with the KL divergence is
\begin{equation}%
    \label{eq:kl_coo_2}
    \left(\tilde y_1, \dots, \tilde y_n\right) = \argmin_{y_1, \dots, y_n \in \Delta^K} \bigg[\sum_i \Big\{ \KL \left(P_{y_i} || P_{\hat y_i} \right) + \frac{\lambda}{2}\sum_{j=1, j\neq i}^n  W_{ij} \KL \Big(P_{y_i} || P_{y_j}\Big)\Big\}  \bigg].
\end{equation}
A coordinate descent approach for solving the above equation is: %
\begin{equation}\textstyle
\tilde y_i = \argmin_{y \in \Delta^k}\left\{ \KL \left(P_{y} || P_{\hat y_i} \right) + \frac{\lambda}{2}\sum_{j=1, j\neq i}^n  W_{ij} \KL \left(P_{y} || P_{\tilde y_j}\right)\right\} \,.
\end{equation}
The following theorem establishes that~\eqref{eq:lap-solve-multi} solves the above problem in the logit space, or equivalently in the space of the corresponding natural parameters (see Appendix~\ref{app:proofs} for the proof):  
\begin{theorem}
\label{thm:KL_laplacian}
Consider the following optimization problem on the space of natural parameters: 
\begin{equation}\textstyle
    \label{eq:kl_coo_1}
    \tilde \eta_i = \argmin_{\eta} \left[\|\eta - \hat \eta_i\|^2 + \frac{\lambda}{2}\sum_{j=1, j\neq i}^n W_{ij}\|\eta - \tilde \eta_j\|^2  \right].
\end{equation}
Then, the minimizer $\tilde \eta_i$ of equation \eqref{eq:kl_coo_1} is the natural parameter corresponding to the minimizer $\tilde y_i$ of \eqref{eq:kl_coo_2}. 
\end{theorem}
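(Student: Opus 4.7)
The plan is to reduce both problems to a common first-order optimality condition by exploiting the exponential-family structure of the multinomial. The multinomial $P_v$ with $v\in\Delta^{K}$ is an exponential family in natural parameters $\eta\in\reals^{K-1}$ with log-partition $A(\eta)=\log\bigl(1+\sum_{j=1}^{K-1}e^{\eta_j}\bigr)$ and mean map $v=\nabla A(\eta)$. The key identity I would exploit is the Bregman representation of the KL divergence,
\begin{equation*}
\KL(P_\eta\|P_{\eta'})=A(\eta')-A(\eta)-\nabla A(\eta)^{\top}(\eta'-\eta),
\end{equation*}
which allows the entire objective in~\eqref{eq:kl_coo_2} to be rewritten purely in natural parameters through the softmax bijection $y_i=\nabla A(\eta_i)$.

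Differentiating a single KL term in its first argument, the contributions from $-A(\eta)$ and from the linear piece of $\nabla A(\eta)^{\top}\eta$ cancel against $\nabla A(\eta)$, leaving only the Hessian term:
\begin{equation*}
\partial_\eta\KL(P_\eta\|P_{\eta'})=\nabla^{2}A(\eta)(\eta-\eta').
\end{equation*}
Applied term-by-term to the coordinate-descent step displayed before the theorem, the first-order condition at the optimum $\tilde\eta_i$ becomes
\begin{equation*}
\nabla^{2}A(\tilde\eta_i)\,\Bigl[(\tilde\eta_i-\hat\eta_i)+\frac{\lambda}{2}\sum_{j\ne i}W_{ij}(\tilde\eta_i-\tilde\eta_j)\Bigr]=0.
\end{equation*}

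Because the multinomial family is minimal, $\nabla^{2}A(\tilde\eta_i)$ equals the Fisher information at $\tilde\eta_i$ and is strictly positive definite, hence invertible. Multiplying by its inverse, the bracketed expression must vanish, which is exactly the first-order condition for the strictly convex quadratic program~\eqref{eq:kl_coo_1}. Both optimization problems admit unique minimizers (each KL term is strictly convex in its first argument, and the quadratic in $\eta$ is coercive and strictly convex), so the two minimizers are in one-to-one correspondence under $\tilde y_i=\nabla A(\tilde\eta_i)$, which is the statement of the theorem.

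The main step that requires care is the gradient identity above: $\eta$ enters $\KL(P_\eta\|P_{\eta'})$ both linearly and nonlinearly through $\nabla A(\eta)$, and it is the clean cancellation leaving the Fisher factor that makes the two first-order systems collapse onto each other. I would also explicitly verify that in the coordinate-descent update only terms with $y_i$ as the \emph{first} argument of KL need be differentiated (matching the form in~\eqref{eq:kl_coo_2}), so that no reverse-KL derivatives of the form $\nabla A(\eta_i)-\nabla A(\tilde\eta_j)$ appear and spoil the correspondence with the symmetric quadratic regularizer in~\eqref{eq:kl_coo_1}.
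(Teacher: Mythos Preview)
Your proposal is correct and follows essentially the same route as the paper: write $\KL(P_{\eta}\|P_{\eta'})$ as the Bregman divergence $d_A(\eta',\eta)$ of the log-partition $A$, differentiate the coordinate-descent objective in $\eta$, factor out the positive-definite Hessian $\nabla^2 A(\tilde\eta_i)$, and identify the remaining linear condition with the first-order condition of the quadratic~\eqref{eq:kl_coo_1}. Your treatment is, if anything, slightly more explicit about the cancellation yielding $\partial_\eta\KL(P_\eta\|P_{\eta'})=\nabla^2A(\eta)(\eta-\eta')$ and about why the Fisher factor can be dropped; the paper reaches the same weighted-average formula via the barycenter calculation and then matches it to the quadratic minimizer.
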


\section{Local IF and Graph Laplacian Regularization}
\label{sec:theory}
In this section, we provide theoretical insights into why the graph Laplacian regularizer enforces individual fairness. 
As pointed out in Section~\ref{sec:setup}, enforcing IF globally is expensive and often reduces a significant amount of accuracy of the final classifier. 
Here, we establish that solving~\eqref{eq:opt_lagrangian} is tantamount to enforcing a localized version of individual fairness, namely $\emph{Local Individual Fairness}$, which is defined below: 
\begin{definition}[Local Individual Fairness]
\label{def:weakly_local}
An ML model $h$ is said to be \emph{locally individually fair} if it satisfies: 
\vspace{-.5em}
\begin{equation}
\bbE_{x \sim P}\left[\limsup_{x': d_\cX(x, x') \downarrow 0} \frac{d_{\cY}(h(x), h(x'))}{d_\cX(x, x')}\right] \leq L < \infty \,.
\end{equation}
\end{definition}

For practical purposes, this means that $h$ is locally individually fair with constants $\epsilon$ and $L$ if it satisfies
\begin{equation}
d_{\cY}(h(x), h(x')) \le L\,d_{\cX}(x, x') \  \text{ for all }x,x'\in\cX \text{ where } d_{\cX}(x, x') \le \eps
\label{eq:local-if-practical}
\end{equation}
in analogy to equation~\eqref{eq:individual-fairness}.
Equation~\eqref{eq:local-if-practical} is a relaxation of traditional IF, where we only care about the Lipschitz-constraint for all pairs of points with small fair distances, i.e., where it is less than some user-defined threshold $\eps$.  

\begin{example}
\label{exm:mahalanobis}
For our theoretical analysis, we need to specify a functional form of the fair metric. A popular choice is a Mahalanobis fair metric proposed by \cite{mukherjee2020Two}, which is defined as:  
\begin{equation}
d^2_\cX(x, x') = (x -x')^{\top}\Sigma (x -x'),
\end{equation}
where $\Sigma$ is a dispersion matrix that puts lower weight in the directions of sensitive attributes and higher weight in the directions of relevant attributes. \cite{mukherjee2020Two} also proposed several algorithms to learn such a fair metric from the data. If we further assume $d_\cY(y_1, y_2) = |y_1 - y_2|$, then a simple application of Lagrange's mean value theorem yields: 
\begin{equation}
\limsup_{x': d_\cX(x, x') \downarrow 0} \frac{|h(x) - h(x')|}{d_\cX(x, x')} \le \|\Sigma^{-1/2}\nabla h(x)\| \,.
\end{equation}
This immediately implies: 
\begin{equation}
\bbE_{x \sim P}\left[\limsup_{x': d_\cX(x, x') \downarrow 0} \frac{d_{\cY}(h(x), h(x'))}{d_\cX(x, x')}\right]  \le \bbE[\|\Sigma^{-1/2}\nabla h(
x)\|] \,,
\end{equation}
i.e., $h$ satisfies \emph{local individual fairness} constraint as long as $ \bbE[\|\Sigma^{-1/2}\nabla h(x)\|] < \infty$. On the other hand, the global IF constraint necessitates $\sup_{x \in \cX} \|\Sigma^{-1/2}\nabla h(x)\| < \infty$, i.e., $h$ is Lipschitz continuous with respect to the Mahalanobis distance.
\end{example}

The main advantage of this local notion of IF over its global counterpart is that the local definition concentrates on the input pairs with smaller fair distance and ignores those with larger distance. 
For example, in Figure~\ref{fig:schematic}, the edge-weights among Alice, Charlie, and Dave are much larger than among any other pairs (which have a weight of $0$); 
therefore, our local notion enforces fairness constraint on the corresponding similar pairs, while ignoring (or being less stringent on) others. This prevents over-smoothing and consequently preserves accuracy while enforcing fairness as is evident from our real data experiments in Section~\ref{sec:experiments}.

We now present our main theorem, which establishes that, under certain assumptions on the underlying hypothesis class and the distribution of inputs, the graph Laplacian regularizers (both unnormalized and normalized random walk) enforce the local IF constraint (as defined in Definition~\ref{def:weakly_local}) in the limit. 
For our theory, we work with $d_\cX$ as the Mahalanobis distance introduced in Example~\ref{exm:mahalanobis} in equation~\eqref{eq:similarity} along with $\theta = 1/(2\sigma^2)$ ($\sigma$ is a bandwidth parameter which goes to $0$ at an appropriate rate as $n \to \infty$) and $\tau = \infty$. 
All our results will be thorough for any finite $\tau$ but with more tedious technical analysis. 
Therefore, our weight matrix $W$ becomes: 
\begin{equation}
W_{ij} = \frac{|\Sigma|^{1/2}}{(2\pi)^{d/2}\sigma^d}\exp\left({-\frac{1}{2\sigma^2}(x_i - x_j)^{\top} \Sigma\, (x_i - x_j)}\right).
\end{equation}
The constant $|\Sigma|^{1/2}/((2\pi)^{d/2}\sigma^d)$ is for the normalization purpose and can be absorbed into the penalty parameter $\lambda$. %
We start by listing our assumptions: 

\begin{assumption}[Assumption on the domain]
\label{assm:domain}
The domain of the inputs $\cX$ is a compact subset of $\reals^d$ where $d$ is the underlying dimension. 
\end{assumption}

\begin{assumption}[Assumption on the hypothesis]
\label{assm:hypothesis}
All functions $f \in \cF$ of the hypothesis class satisfy the following: 
\vspace{-.5em}
\begin{enumerate}
    \item The $i^{th}$ derivative $f^{(i)}$ is uniformly bounded over the domain $\cX$ of inputs for $i \in \{0, 1, 2\}$. 
    \item $f^{(1)}(x) = 0$ for all $x \in \partial X$, where $\partial X$ denotes the boundary of $\cX$. 
\end{enumerate}
\end{assumption}

\begin{assumption}[Assumption on the density of inputs]
\label{assm:density}
The density $p$ of the input random variable $x$ on the domain $\cX$ satisfies the following: 
\vspace{-.5em}
\begin{enumerate}
    \item There exists $p_{\max} < \infty$ and $p_{\min} > 0$ such that, for all $x \in \cX$, we have $p_{\min} \le p(x) \le p_{\max}$. 
    \item The derivatives $\{p^{(i)}\}_{i = 0, 1, 2}$ of the density $p$ are uniformly bounded on the domain $\cX$.
\end{enumerate}
\end{assumption}
\vspace{-.5em}
\paragraph{Discussion on the assumptions} 
Most of our assumptions (e.g., compactness of the domain, bounded derivatives of $f$ or $p$) are for technical simplicity and are fairly common for the asymptotic analysis of graph regularization (see, e.g., Hein~\textit{et al.}~\cite{hein2005graphs, hein2007graph} and references therein).
It is possible to relax some of the assumptions: for example, if the domain $\cX$ of inputs is unbounded, then the target function $f$ and the density $p$ should decay at certain rate so that observations far away will not be able to affect the convergence (e.g., sub-exponential tails). 
Part (2.) of Assumption~\ref{assm:hypothesis} can be relaxed if we assume $p(x)$ is $0$ at boundary. 
However, we do not pursue these extensions further in this manuscript, as they are purely technical and do not add anything of significance to the main intuition of the result.

\begin{theorem}
\label{thm:main_thm}
Under Assumptions \ref{assm:domain} - \ref{assm:density}, we have: 
\begin{enumerate}
    \item If the sequence of bandwidths $\sigma \equiv \sigma_n \downarrow 0$ such that $n\sigma_n^2 \to \infty$ and $\bbL_{un, n}$ is unnormalized Laplacian matrix, then
    \begin{equation}
    \frac{2}{n^2\sigma^2}\mathbf{f}^{\top}\bbL_{un, n}\mathbf{f} \overset{P}{\longrightarrow} \bbE_{x \sim p}\left[\nabla f(x)^{\top}\Sigma^{-1} \nabla f(x)\  p(x)\right] \,.
    \end{equation}
    \item If the sequence of bandwidths $\sigma \equiv \sigma_n \downarrow 0$ such that $(n\sigma^{d+4})/(\log{(1/\sigma)}) \to \infty$ and $\bbL_{nrw, n}$ is the normalized random walk Laplacian matrix, then: 
    \begin{equation}
    \frac{1}{n\sigma^2}\mathbf{f}^{\top}\bbL_{nrw, n} \mathbf{f} \overset{P}{\longrightarrow} \bbE_{x \sim p}\left[\nabla f(x)^{\top}\Sigma^{-1} \nabla f(x)\right] \,.
    \end{equation}
\end{enumerate}
where $\mathbf{f} =\{f(x_i)\}_{i=1}^n$. 
Consequently, both Laplacian regularizers asymptotically enforce local IF. 
\end{theorem}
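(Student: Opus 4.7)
The plan is to expand both quadratic forms into V-statistics indexed by the kernel weights $W_{ij}$ and analyze each via change of variables, a Taylor expansion of $f$ and $p$, and a variance bound. For clarity, write the weight function as
\[
K_\sigma(x,x') = \frac{|\Sigma|^{1/2}}{(2\pi)^{d/2}\sigma^d}\exp\!\left(-\tfrac{1}{2\sigma^2}(x-x')^\top \Sigma (x-x')\right),
\]
so that $W_{ij} = K_\sigma(x_i, x_j)$. The change of variables $x' = x + \sigma \Sigma^{-1/2} u$ converts $K_\sigma(x,\cdot)\,dx'$ into the standard Gaussian density $\phi(u)\,du$, a move that drives both calculations.

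For part (1), the identity in~\eqref{eq:laplacian} gives
\[
\frac{2}{n^2\sigma^2}\mathbf{f}^\top \bbL_{un,n}\mathbf{f} = \frac{1}{n^2\sigma^2}\sum_{i\ne j}K_\sigma(x_i,x_j)\bigl(f(x_i)-f(x_j)\bigr)^2,
\]
a V-statistic with symmetric kernel $h_\sigma(x,x')=\sigma^{-2}K_\sigma(x,x')(f(x)-f(x'))^2$. I would first compute $\bbE[h_\sigma(X,X')]$ via the change of variables above combined with a second-order Taylor expansion $f(x')-f(x)=\sigma\nabla f(x)^\top \Sigma^{-1/2} u + O(\sigma^2)$ and $p(x')=p(x)+O(\sigma)$. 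Squaring, integrating against $\phi(u)$, and using $\int uu^\top \phi(u)\,du = I$ yield
\[
\bbE[h_\sigma(X,X')] \longrightarrow \int p(x)^2\, \nabla f(x)^\top \Sigma^{-1} \nabla f(x)\,dx = \bbE_{x\sim p}\bigl[p(x)\,\nabla f(x)^\top \Sigma^{-1} \nabla f(x)\bigr],
\]
with odd-order Taylor contributions killed by symmetry of $\phi$ and $O(\sigma)$ boundary corrections killed by Assumption~\ref{assm:hypothesis}(2). A Hoeffding decomposition of the V-statistic, using Assumption~\ref{assm:hypothesis}(1) and Assumption~\ref{assm:density}(1) to bound $h_\sigma$, shows the variance vanishes under the stated bandwidth condition, delivering convergence in probability.

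For part (2), I would exploit the symmetrization $\mathbf{f}^\top \bbL_{nrw,n}\mathbf{f} = \mathbf{f}^\top \tfrac{1}{2}(\bbL_{nrw,n}+\bbL_{nrw,n}^\top)\mathbf{f}$ together with the uniform degree concentration $\sup_i |D_{ii}/n - p(x_i)| \to 0$ in probability---a sup-norm kernel density estimation statement whose rate dictates the stronger bandwidth condition $(n\sigma^{d+4})/\log(1/\sigma)\to\infty$ via a standard covering/concentration argument. This delivers the uniform approximations $\tilde W_{ij}\approx K_\sigma(x_i,x_j)/(n\sqrt{p(x_i)p(x_j)})$ and $\tilde D_{ii}\approx 1$, so that
\[
\frac{1}{n\sigma^2}\mathbf{f}^\top \bbL_{nrw,n}\mathbf{f} \approx \frac{1}{2n^2\sigma^2}\sum_{i,j}\frac{K_\sigma(x_i,x_j)}{\sqrt{p(x_i)p(x_j)}}\bigl(f(x_i)-f(x_j)\bigr)^2.
\]
Repeating the Taylor computation of part (1) with the extra weight $1/\sqrt{p(x)p(x')}$ produces the density-free limit $\bbE_{x\sim p}[\nabla f(x)^\top \Sigma^{-1}\nabla f(x)]$: the two factors of $p$ from the joint density are exactly cancelled by the double normalization.

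The main technical obstacle is the uniform degree concentration in part (2); the bias-variance tradeoff in sup-norm kernel density estimation forces the more stringent bandwidth condition. Secondary difficulties include the bookkeeping of the $O(\sigma)$ boundary corrections (handled by Assumption~\ref{assm:hypothesis}(2)) and the V-statistic variance bound (routine under the boundedness assumptions on $f$ and $p$). Once both limits are established, the conclusion that the Laplacian regularizers asymptotically enforce local IF follows by comparison with Example~\ref{exm:mahalanobis}: the limits are integrated versions of $\|\Sigma^{-1/2}\nabla f(x)\|^2$, whose finiteness is precisely the local IF constraint under the Mahalanobis fair metric.
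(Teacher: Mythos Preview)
Your treatment of Part 1 is correct and essentially identical to the paper's: rewrite the unnormalized quadratic form as a second-order V-statistic in $\sigma^{-2}K_\sigma(x,x')(f(x)-f(x'))^2$, Taylor expand after the substitution $x'=x+\sigma\Sigma^{-1/2}u$, and control the variance. The paper carries this out with the same mechanics.

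Part 2 has a genuine gap. The quadratic form $\frac{1}{n\sigma^2}\mathbf{f}^\top\bbL_{nrw,n}\mathbf{f}$ is a difference of two quantities that are each of order $1/\sigma^2$; only their cancellation is $O(1)$. Your approximations $\tilde W_{ij}\approx K_\sigma(x_i,x_j)/(n\sqrt{p(x_i)p(x_j)})$ and $\tilde D_{ii}\approx 1$ are each only $O(\sigma^2)$-accurate (this is the usual kernel-smoothing bias: $p_h(x)-p(x)=O(\sigma^2)$ and $\tilde D_{ii}-1=O(\sigma^2)$), so after division by $\sigma^2$ they leave behind a non-vanishing $O(1)$ remainder. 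Concretely, your surrogate V-statistic
\[
\frac{1}{2n^2\sigma^2}\sum_{i,j}\frac{K_\sigma(x_i,x_j)}{\sqrt{p(x_i)p(x_j)}}\bigl(f(x_i)-f(x_j)\bigr)^2
\]
has limit $\tfrac{1}{2}\,\bbE_{x\sim p}[\nabla f(x)^\top\Sigma^{-1}\nabla f(x)]$ (the two density factors from $p(x)p(x')\,dx\,dx'$ are collapsed to a single $p(x)$ by the $1/\sqrt{pp}$ weight), which is off from the claimed limit by a factor of two. The missing half sits precisely in the $O(\sigma^2)$ term you discarded when setting $\tilde D_{ii}\to 1$; the double normalization in $\bbL_{nrw,n}$ is what produces the density-free operator, and a single $(f_i-f_j)^2$ reduction does not capture it.

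The paper handles this differently. It does \emph{not} reduce to a difference-squared form. Instead it replaces only the random degrees $d_{n,h}$, $\tilde d_{n,h}$ by their finite-bandwidth expectations $p_h$, $\dbtilde d_h$ (not by the $\sigma\to 0$ limits $p$ and $1$); the sup-norm KDE concentration you mention is used for exactly this step, and is the source of the bandwidth condition $(n\sigma^{d+4})/\log(1/\sigma)\to\infty$. The resulting deterministic form is then written as $\bbE[g_\sigma(X)]$ with $g_\sigma(x)\to -f(x)\Delta_{\Sigma^{-1}} f(x)$ pointwise via the Hein--Audibert--von Luxburg operator limit, and the passage to $\bbE[\nabla f^\top\Sigma^{-1}\nabla f]$ is closed by Green's identity together with a uniform-integrability argument (Vitali's theorem). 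To repair your approach you would need either to keep $p_h$ and $\dbtilde d_h$ in the expression and redo the limit at that level, or to track the $O(\sigma^2)$ remainder in $\tilde D_{ii}-1$ explicitly and show it supplies the missing half of the limit.
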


The proof of the above theorem can be found in Appendix~\ref{app:proofs}. 
When we use a normalized random walk graph Laplacian matrix $\bbL_{nrw, n}$ as regularizer, the regularizer does (asymptotically) penalize $\bbE\left[\nabla f(x)^{\top}\Sigma^{-1} \nabla f(x)\right] = \bbE\left[\|\Sigma^{-1/2}\nabla f(x)\|^2\right]$, which, by Example~\ref{exm:mahalanobis}, is equivalent to enforcing the local IF constraint. 
Similarly, the un-normalized Laplacian matrix $\bbL_{un, n}$, also enforces the same under Assumption~\ref{assm:density} as: 
\begin{equation}%
\bbE\left[\|\Sigma^{-1/2}\nabla f(x)\|^2\right] \le \frac{1}{p_{\min}}\bbE\left[\nabla f(x)^{\top}\Sigma^{-1} \nabla f(x)\  p(x)\right],\text{ 
where }p_{\min} = \inf_{x \in \cX} p(x).
\end{equation}
Although both the Laplacian matrices enforce local IF, the primary difference between them is that the limit of the unnormalized Laplacian involves the density $p(x)$, i.e., it upweights the high-density region (consequently stringent imposition of fairness constraint), whereas it down-weights the under-represented/low-density region. 
On the other hand, the limit corresponding to the normalized random walk Laplacian matrix does not depend on $p(x)$ and enforces fairness constraint with equal intensity on the entire input space. 
We used both regularizers in our experiments, comparing and contrasting their performance on several practical ML problems.

\clearpage

\section{Experiments}
\label{sec:experiments}
The goals of our experiments are threefold: 
\begin{enumerate}
    \item Exploring the trade-offs between post-processing for local IF with GLIF and post-processing with (global) IF constraints using our adaptation of the algorithm by Dwork~\textit{et al.}~\cite{dwork2012fairness} described in~\eqref{eq:IF-projection}. 
    \item Studying practical implications of theoretical differences between GLIF and GLIF-NRW, i.e., different graph Laplacians, presented in Section~\ref{sec:theory}.
    \item Evaluating the effectiveness of GLIF in its main application, i.e., computationally light debiasing of large deep learning models such as BERT.
\end{enumerate}

The implementation of this work is available at \href{https://github.com/Felix-Petersen/fairness-post-processing}{\color{blue!50!black}github.com/Felix-Petersen/fairness-post-processing}.

\begin{figure}[b]
    \includegraphics[width=.32\linewidth]{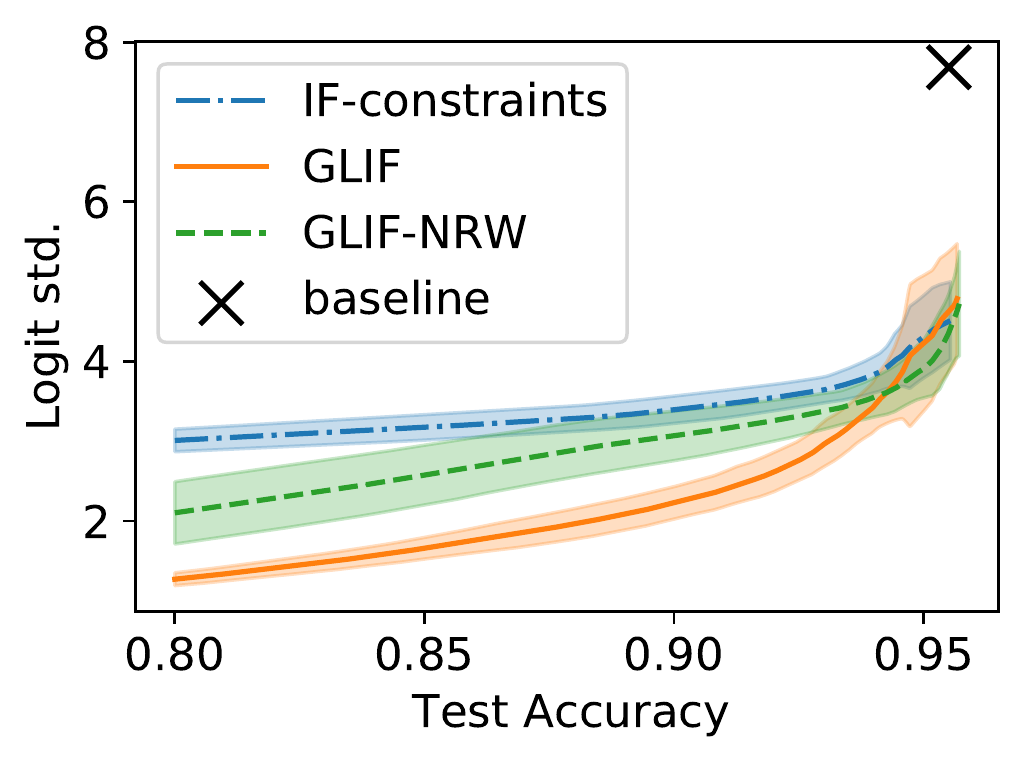}\hfill
    \includegraphics[width=.32\linewidth]{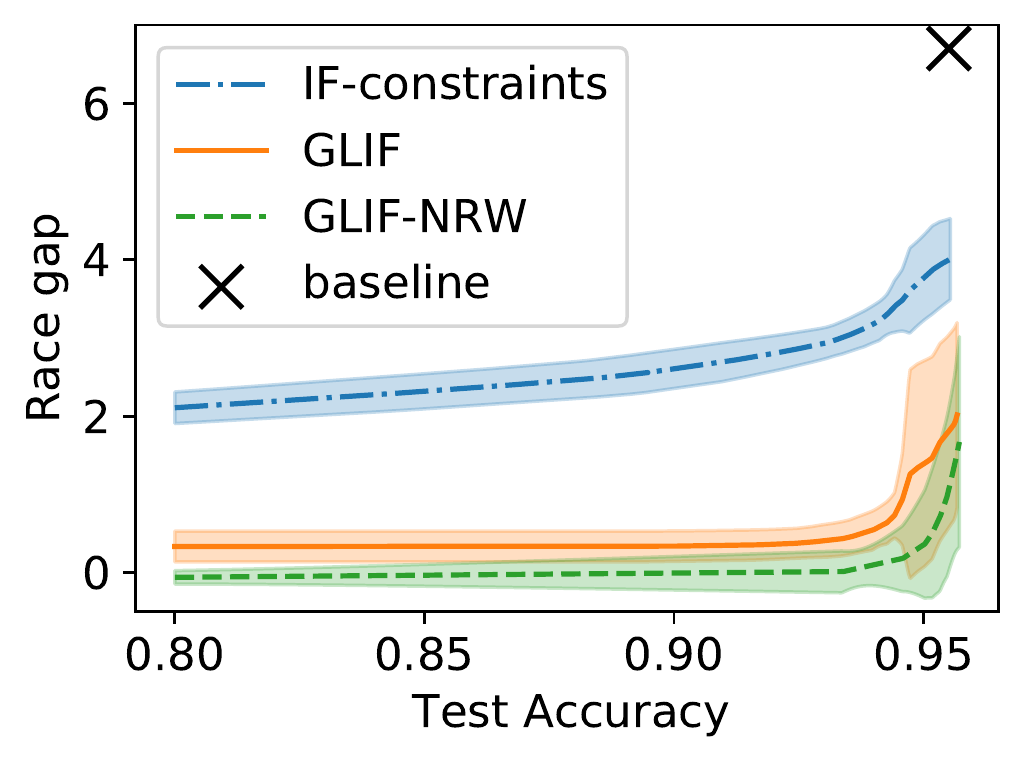}\hfill
    \includegraphics[width=.32\linewidth]{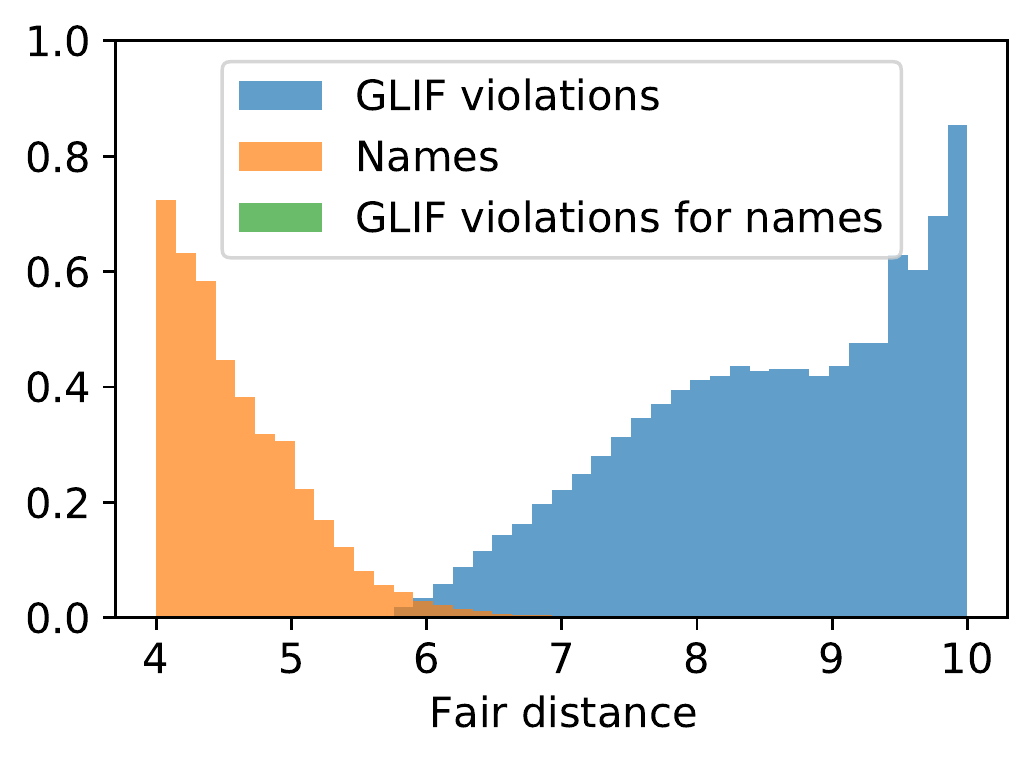}
    \caption{
    Sentiment experiment. 
    Left: Trade-off between standard deviations of logits of names (measuring individual fairness) and accuracy. 
    Center: Trade-off between race gap (measuring group fairness) and accuracy. 
    Right: Frequencies of violations of the global IF constraints after applying GLIF, constraints corresponding to names, and GLIF's global IF constraint violations for names.
    }
    \label{fig:sentiment-plots}
\end{figure}

\subsection{Comparing GLIF and Global IF-constraints}

For our first experiment, we consider the sentiment prediction task \cite{hu2004mining}, where our goal is to classify words as having a positive or negative sentiment. 
The baseline model is a neural network trained with GloVe word embeddings \cite{pennington2014glove}. 
Following Yurochkin~\textit{et al.}~\cite{yurochkin2020training}, we evaluate the model on a set of names and observe that it assigns varying sentiments to names. An individually fair model should assign similar sentiment scores to all names.
Further, we observe that there is a gap between average sentiments of names typical for Caucasian and African-American ethnic groups \cite{caliskan2017semantics}, which is violating group fairness.
Yurochkin~\textit{et al.}~\cite{yurochkin2020training} propose a fair metric learning procedure for this task using a side data set of names, and an in-processing technique for achieving individual fairness. 
We use their method to obtain the fair metric and compare post-processing of the baseline model with GLIF, GLIF-NRW and the global IF-constraints method. 
The test set comprises $663$ words from the original task and $94$ names. 
For post-processing, no problem specific knowledge is used.
The resulting post-processed predictions for the original test set are used to evaluate accuracy, and the predictions on the names are used for evaluating fairness metrics. 
Even for this small problem, the global IF-constraints method, i.e., a CVXPY \cite{diamond2016cvxpy} implementation of \eqref{eq:IF-projection}, takes $7$ minutes to run. 
Due to the poor scalability of the global IF-constraints method, we can use it only for the study of this smaller data set and can not consider it for the large language model experiments in Section~\ref{sec:bert-experiments}. 
For GLIF(-NRW), we implement the closed-form solution \eqref{eq:lap-solve} that takes less than a tenth of a second to run. 
See Appendix~\ref{app:exps} for additional experimental details and a runtime analysis.

We evaluate the fairness-accuracy trade-offs for a range of threshold parameters $\tau$ (for GLIF and GLIF-NRW) and for a range of Lipschitz-constants $L$ (for IF-constraints) in Figure~\ref{fig:sentiment-plots}. 
Figure~\ref{fig:sentiment-plots} (left) shows the standard deviation of the post-processed outputs on all names as a function of test accuracy on the original sentiment task. 
Lower standard deviations imply that all names received similar predictions, which is the goal of individual fairness. 
Figure~\ref{fig:sentiment-plots} (center) visualizes group fairness and accuracy, i.e., difference in average name sentiment scores for the two ethnic groups. 
In this problem, individual fairness is a stronger notion of fairness: achieving similar predictions for all names implies similar group averages, but not vice a versa. 
Therefore, for this task, post-processing for individual fairness also corrects group disparities.

In both settings, GLIF and GLIF-NRW achieve substantially better fairness metrics for the same levels of test accuracy in comparison to the IF-constraints method.
To understand the reason for this, we study which global IF constraints are violated after applying the GLIF method in Figure~\ref{fig:sentiment-plots} (right). 
Corresponding to the unique pairs of words in our test set, there are $n(n-1)/2$ unique constraints in~\eqref{eq:IF-projection}, and the global IF-constraints method satisfies all of them by design. 
Each constraint (i.e., each pair of words) corresponds to a fair distance, which is small for (under the fair metric) similar words and large for dissimilar words.
We bin the constraints by fair distance and present the proportion of global IF constraints violated after applying the GLIF method for each bin in the histogram in Figure \ref{fig:sentiment-plots} (right).
Here, we set the Lipschitz-constant $L$ in \eqref{eq:IF-projection} to $L=2.25$ corresponding to a $89.4\%$ accuracy of the IF-constraints method and show global IF constraint violations of GLIF corresponding to $95\%$ accuracy in blue.
This means that we use strong global IF constraints and use a setting of the GLIF method which maintains most of the accuracy, which would not be possible using the IF-constraints method.
GLIF does not violate any constraints corresponding to small fair distances, i.e., it satisfies IF on similar individuals, while violating many large fair distance constraints. 
This can be seen as basically all constraint violations (blue) are at large fair distances of greater or equal $6$.
This demonstrates the effect of enforcing \emph{local} individual fairness from our theoretical analysis in Section~\ref{sec:theory}. 
At the same time, we display frequency of constraints that correspond to pairs of names in orange, where we can see that almost all constraints corresponding to names occur at small fair distances of smaller or equal to $6$.
This is expected in this task because we consider all names similar, so fair distances between them should be small. 
We can see that the distributions of constraint violations after applying GLIF (blue, right) and names (orange, left) are almost disjoint.
We mark all global IF constraint violations after applying GLIF that correspond to names in green, and observe that there are none. 
Summarizing, GLIF ignores unnecessary (in the context of this problem) constraints allowing it to achieve higher accuracy, while satisfying the more relevant local IF constraints leading to improved fairness.

Regarding the practical differences between GLIF and GLIF-NRW, in Figure~\ref{fig:sentiment-plots} (left) GLIF has smaller standard deviations on the name outputs, but in in Figure~\ref{fig:sentiment-plots} (center) GLIF-NRW achieves lower race gap. 
In Theorem~\ref{thm:main_thm}, we showed that GLIF penalizes fairness violations in high density data regions stronger. 
As a result, GLIF may favor enforcing similar outputs in the high density region causing lower standard deviation, while leaving outputs nearly unchanged in the lower density region, resulting in larger race gaps.
GLIF-NRW weights all data density regions equally, i.e., it is less likely to miss a small subset of names, but is less stringent in the high density regions.

\begin{figure}[b]
    \begin{minipage}{.49\textwidth}
        \centering
        \includegraphics[width=\linewidth]{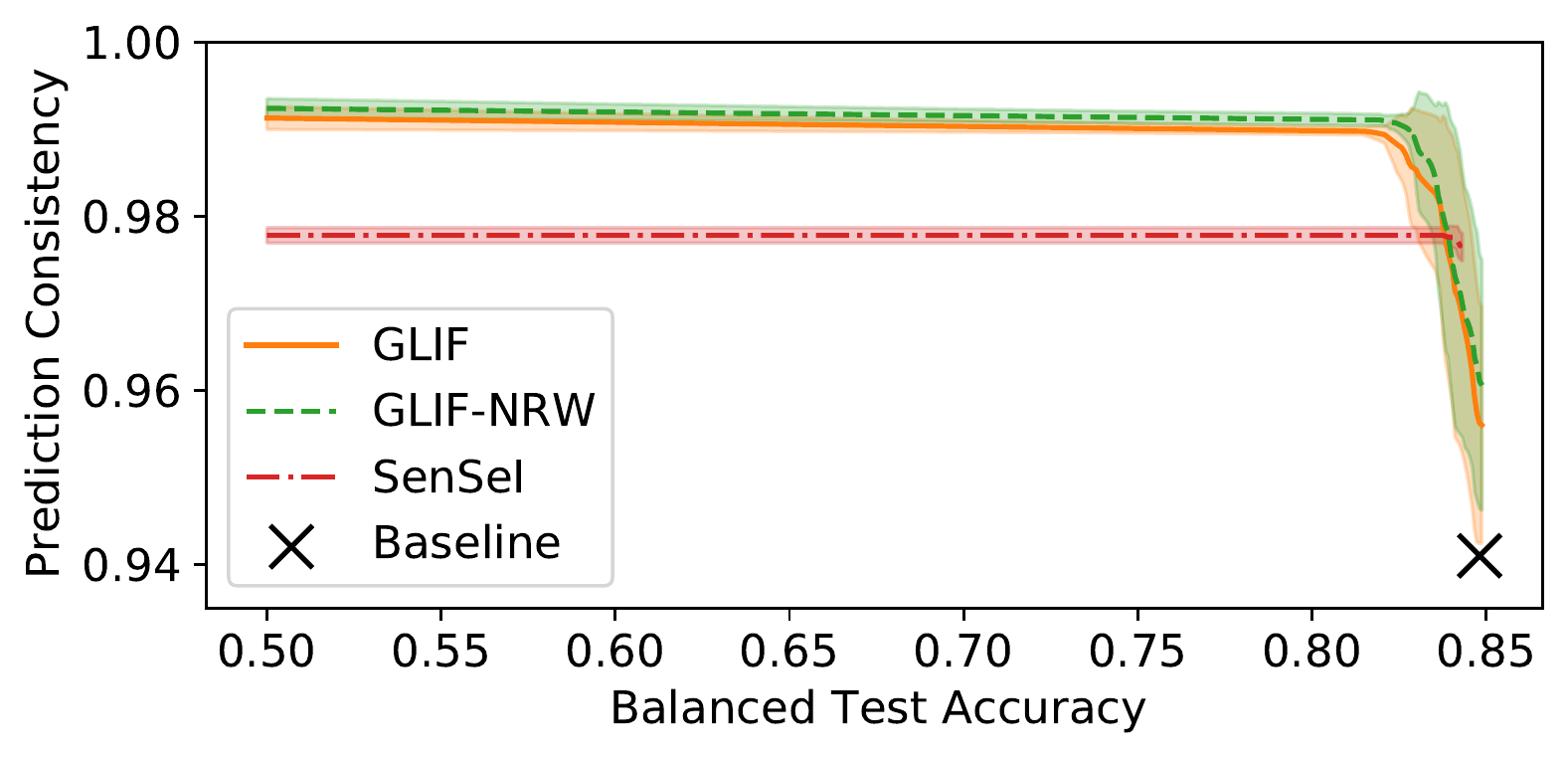}
    \end{minipage}\hfill%
    \begin{minipage}{0.49\textwidth}
        \centering
        \includegraphics[width=\linewidth]{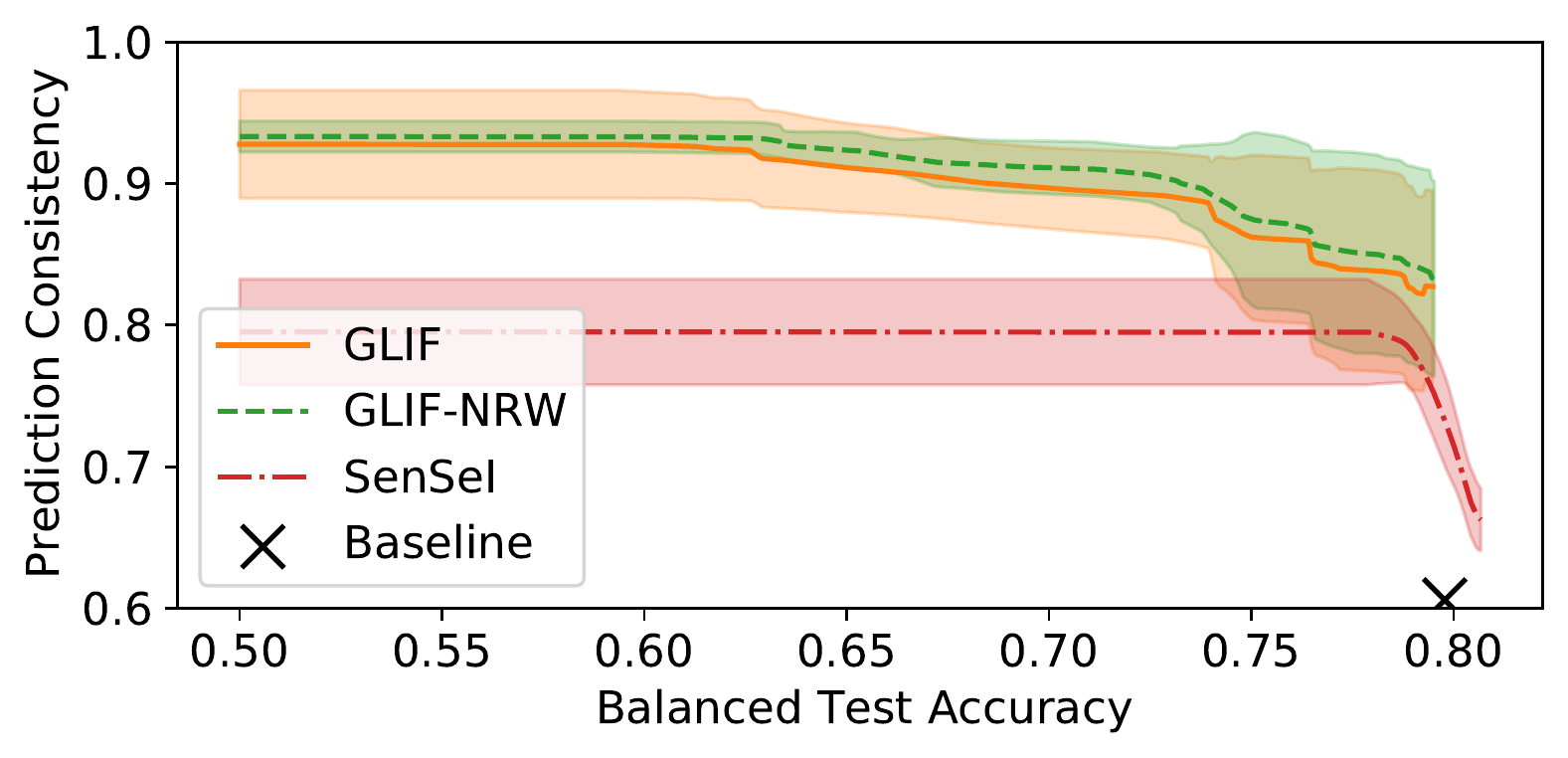}
    \end{minipage}
    \caption{Accuracy-Consistency trade-offs for Bios (left) and Toxicity (right).}
    \label{fig:bios-plot}
    \label{fig:toxicity-plot}
\end{figure}

\subsection{Post-processing for Debiasing Large Language Models}
\label{sec:bert-experiments}
Large language models have achieved impressive results on many tasks; however, there is also significant evidence demonstrating that they are prone to biases \cite{kurita2019measuring,nadeem2020stereoset,bender2021dangers}. 
Debiasing these models remains largely an open problem: most in-processing algorithms are not applicable or computationally prohibitive due to large and highly complex model architectures, and challenges in handling text inputs. 
Even if an appropriate in-processing algorithm arises, significant environmental impact due to re-training is unavoidable \cite{strubell2019energy,bender2021dangers}.
In our experiments, we evaluate effectiveness of GLIF as a simple post-processing technique to debias BERT-based models for text classification. 
Another possible solution is to fine-tune BERT with an in-processing technique as was done by Yurochkin~\textit{et al.}~\cite{yurochkin2021sensei}. 
The two approaches are not directly comparable: 
fine-tuning with SenSeI \cite{yurochkin2021sensei} requires knowledge of the model parameters, alleviates only part of the computational burden, and has more stringent requirements on the fair metric, while post-processing with GLIF is transductive, i.e., it requires access to unlabeled test data (see extended discussion in Section~\ref{sec:review-pp}).

We replicate the experiments of Yurochkin~\textit{et al.}~\cite{yurochkin2021sensei} on Bios \cite{de2019bias} and Toxicity\footnote{Based on the Kaggle ``Toxic Comment Classification Challenge''.} data sets. 
They use the approach of Mukherjee~\textit{et al.}~\cite{mukherjee2020Two} for fair metric learning which we reproduce. 
We refer to the Appendix~B.1 of \cite{yurochkin2021sensei} for details. 
In both tasks, following \cite{yurochkin2021sensei}, we quantify performance with balanced accuracy due to class imbalance, and measure individual fairness via \emph{prediction consistency}, i.e., the fraction of test points where the prediction remains unchanged when performing task-specific input modifications.
For implementation details, see Appendix~\ref{app:exps}.
In Appendix~\ref{apx:runtime}, we analyze the runtime and distinguish between the closed-form and coordinate descent variants of GLIF.

In \emph{Bios}, the goal is to predict the occupation of a person based on their textual biography. 
Such models can be useful for recruiting purposes. 
However, due to historical gender bias in some occupations, the baseline BERT model learns to associate gender pronouns and names with the corresponding occupations. 
Individual fairness is measured with prediction consistency with respect to gender pronouns and names alterations.
A prediction is considered consistent if it is the same after swapping the gender pronouns and names.
We present the fairness-accuracy trade-off in Figure~\ref{fig:bios-plot} (left) for a range of threshold parameters $\tau$, and compare performance based on hyperparameter values selected with a validation data in Table~\ref{tab:bios-results}. 
Both GLIF and GLIF-NRW noticeably improve individual fairness measured with prediction consistency, while retaining most of the accuracy.

In \emph{Toxicity}, the task is to identify toxic comments---an important tool for facilitating inclusive discussions online. 
The baseline BERT model learns to associate certain identity words with toxicity (e.g., ``gay'') because they are often abused in online conversations. 
The prediction consistency is measured with respect to changes to identity words in the inputs. 
There are $50$ identity words, e.g., ``gay'', ``muslim'', ``asian'', etc. and a prediction is considered consistent if it is the same for all $50$ identities.
We present the trade-off plots in Figure~\ref{fig:toxicity-plot} (right) and compare performance in Table~\ref{tab:toxicity-results} (right).
Our methods reduce individual biases in BERT predictions. 
We note that in both Toxicity and Bios experiments, we observe no practical differences between GLIF and GLIF-NRW.

\begin{table}[t]
    \begin{minipage}{.49\textwidth}
        \centering
        \caption{Results for the Bios task.}
        \vspace{.25em}
        \label{tab:bios-results}
        \resizebox{\linewidth}{!}{
        \begin{tabular}{llllllll}
            \toprule
            Method          & Test Acc. & Pred.~Consist.    \\
            \midrule
            Baseline        & $\pmb{0.846} \pm 0.003$ & $0.942 \pm 0.002$   \\
            GLIF            & $0.830 \pm 0.004$ & $0.986 \pm 0.002$   \\
            GLIF-NRW        & $0.834 \pm 0.003$ & $\pmb{0.988} \pm 0.002$   \\
            SenSEI          & $0.843 \pm 0.003$ & $0.977 \pm 0.001$   \\
            \bottomrule
        \end{tabular}
        }
    \end{minipage}\hfill%
    \begin{minipage}{0.49\textwidth}
        \centering
        \caption{Results for the Toxicity task.}
        \vspace{.25em}
        \label{tab:toxicity-results}
        \resizebox{\linewidth}{!}{
        \begin{tabular}{llllllll}
            \toprule
            Method          & Test Acc. & Pred.~Consist.    \\
            \midrule
            Baseline        & $\pmb{0.809} \pm 0.004$ & $0.614 \pm 0.013$   \\
            GLIF            & $0.803 \pm 0.003$ & $0.835 \pm 0.012$   \\
            GLIF-NRW        & $0.803 \pm 0.003$ & $\pmb{0.844} \pm 0.013$   \\
            SenSEI          & $0.791 \pm 0.005$ & $0.773 \pm 0.043$   \\
            \bottomrule
        \end{tabular}
        }
    \end{minipage}
\end{table}

\section{Summary and Discussion}

We studied post-processing methods for enforcing individual fairness. The methods provably enforce a local form of IF and scale readily to large data sets. We hope this broadens the appeal of IF by (i)~alleviating the computational costs of operationalizing IF and (ii)~allowing practitioners to use off-the-shelf models for standard ML tasks. We also note that it is possible to use our objective for in-processing.

We conclude with two warnings: 
First, enforcing any algorithmic fairness definition does not guarantee complete fairness from the perspective of the user. The problem-specific meaning of fairness is often hard to encode exactly with a mathematical fairness definition.
Second, while local individual fairness is a reasonable choice in many applications, this choice should be understood and verified by the practitioner depending on the situation.

\clearpage

\begin{ack}
This note is based upon work supported by the National Science Foundation (NSF) under grants no.\ 1916271, 2027737, and 2113373 and supported by the German Research Foundation (DFG) under Germany's Excellence Strategy EXC--2117--390829875. 
Any opinions, findings, and conclusions or recommendations expressed in this note are those of the authors and do not necessarily reflect the views of the NSF nor the DFG.
\end{ack}

\printbibliography

\clearpage
\appendix
\title{Appendix: Post-processing for Individual Fairness}
\author{Felix Petersen${}^*$~~~~~~Debarghya Mukherjee${}^*$~~~~~~Yuekai Sun~~~~~~Mikhail Yurochkin}

\makeatletter
\@maketitle
\makeatother

In the appendices, we start by explaining experimental details in Appendix~\ref{app:exps} and present the proofs for our main theorems in Appendix~\ref{app:proofs}.

\section{Experimental Details}
\label{app:exps}

For the experimental evaluation, we use the three methods: IF-constraints, GLIF, and GLIF-NRW (see Section~\ref{sec:glif}).
For GLIF and GLIF-NRW, we use both the closed-form solution (see equation \eqref{eq:lap-solve}) and the coordinate descent algorithm (Section~\ref{sec:coordinate-descent}).
We evaluate on the sentiment, the bios, and the toxicity data sets, which are discussed in the following.
For all experiments, we report means and standard deviations over $10$ repetitions / seeds.

\subsection{Experimental Settings}

\paragraph{Sentiment prediction} In this task, we post-process a neural network with $1\,000$  hidden units trained to predict sentiment, e.g., ``nice'' is positive and ``ugly'' is negative, using $5\,961$ labeled words embedded with $300$-dimensional GloVe word embeddings~\cite{pennington2014glove}. 
Post-processing is applied to the model predictions on $663$ (unlabeled) test words mixed with $94$ names ($49$ names popular among Caucasian population and $45$ names popular among African-American population). This experiment is based on the fair sentiment prediction experiment of Yurochkin~\textit{et al.}~\cite{yurochkin2020training}. 
Train and test words were collected by Hu~\textit{et al.}~\cite{hu2004mining}. 
The list of names for evaluating fairness was proposed by Caliskan~\textit{et al.}~\cite{caliskan2017semantics}. 
To obtain the fair metric, we followed the original experiment \cite{yurochkin2020training}: learn a ``sensitive'' subspace via PCA with $50$ components applied to a side data set of popular baby names in New York City.
The fair metric is constructed to ignore any variation in this subspace, i.e., it is equal to Euclidean distance on word embeddings projected onto the orthogonal complement of the sensitive subspace.

\paragraph{Bios} We use the data set proposed by de~Arteaga~\textit{et al.}~\cite{de2019bias} and follow the experimental setup of Yurochkin~\textit{et al.}~\cite{yurochkin2021sensei}. 
In this task, we post-process fine-tuned BERT-Base-Uncased \cite{devlin2018BERT} replicating the training setup described in Appendix B in \cite{yurochkin2021sensei}. 
This yields $28$-dimensional outputs (a logit per class) for each biography.
We also replicate fair metric learning procedure of Yurochkin~\textit{et al.}~\cite{yurochkin2021sensei}: for each training bio, we create an alternative bio by swapping gender pronouns, e.g., "He is a lawyer" to "She is a lawyer", and use the embeddings from the fine-tuned BERT. 
Then, we use the FACE method by Mukherjee~\textit{et al.}~\cite{mukherjee2020Two} with $25$ factors, considering each pair of original and altered bios as a pair of similar examples. 
For each seed, $354\,080$ training biographies are used for fine-tuning BERT and fair metric learning. 
We apply post-processing to $39\,343$ test predictions mixed with the same number of predictions for bios created by altering names and gender pronouns. 
Thus, the total number of predictions that we post-process is $78\,686$. 
We use the altered bios in the test set to evaluate prediction consistency, and evaluate the test accuracy only on the $39\,343$ original test bios. 
Our methods do not have any knowledge of the alteration procedure nor which bios are the alterations.

\paragraph{Toxicity} For this task, we use the data set derived from the ``Toxic Comment Classification
Challenge'' Kaggle competition following the experimental setup of Yurochkin~\textit{et al.}~\cite{yurochkin2021sensei}. 
BERT fine-tuning and fair metric learning is similar to the bios experiment and follows the original experiment. 
This is a binary classification problem. %
The experiment utilizes a list of $50$ identity tokens \cite{dixon2018measuring} analogous to the gender pronouns in the Bios experiment. 
There are $155\,618$ training comments, among which $55\%$ have at least one of the $50$ identity tokens. 
To learn the fair metric, a random subset of $25$ identity tokens (out of the $50$) is used. 
For each comment in the training set (with at least one of the known $25$ identity tokens), $25$ alterations are created, forming groups of comparable samples for the FACE method with $25$ factors. 
At test time, we are interested in prediction consistency on the original comments and all $50$ alterations. 
For each seed, we apply our methods to post-process predictions on a test set comprising around $3\,640$ test comments without any of the $50$ identity tokens, around $4\,550$ test comments with at least one of the identity tokens, and their $227\,500$ alterations corresponding to $50$ identity tokens. The total number of post-processed predictions (depending on the seed) is around $235\,690$. 
The test accuracy is evaluated on the original (unaltered) $3\,640+4\,550=8\,190$ test comments. 
As before, our methods have no knowledge of the identity tokens nor the existence of alterations in the data being post-processed.

\subsection{Methods}

For the sentiment task, we use the closed-form method of GLIF as in equation (3.3) of the main text.
For the other tasks, we use the coordinate descent algorithm (described in Section~3.2.2 of the main text) as the test data set is too large to fit in memory. 
For the coordinate descent algorithm, we used $10$ epochs, which we found to work well across all data sets.

\vspace{-.5em}
\subsection{Hyperparameters}

We used grid search on a validation data set to find the best hyperparameter for each experimental setting.
We optimized the threshold parameter $\tau$ (see equation (2.2)) and the regularization strength $\lambda$ (see equations (3.1) and (3.4)) considering the following ranges:
\begin{itemize}
    \item $\lambda \in \{ 0.01, 0.03, 0.1, 0.3, 1, 3, 10, 30, 100 \}$ 
    \item $\tau \in \{ 10^{0.02 i} \text{ for } i\in\{ -50...100 \} \}$ and rounded to a close fraction.
\end{itemize}
For GLIF-NRW, we (internally) multiply $\lambda$ by the average degree in the graph which yields a good effective $\lambda$ for each $\tau$ (as $\tau$ significantly influences the average degree). Using this procedure, we found the following to work best for each data set, which we also used for Tables~1 and 2 in the main text:

\textbf{Sentiment} $\lambda=0.1, \tau=30$\\[.3em]
\textbf{Bios} $\lambda=10, \tau=16$\\[.3em]
\textbf{Toxicity} $\lambda=30, \tau=0.4$

We found that the exact value is not crucial, and multiple neighboring $\lambda$s and $\tau$s achieved around the same performance.
As for the factor $\theta$ (see equation (2.2)), we used $\theta=10^{-4}$ but found this to perform indistinguishable to other choices such as $10^{-3}$, $10^{-6}$, and $10^{-8}$.

For the accuracy-fairness trade-off plots, we plot the full range of $\tau$.

\subsection{Runtime Analysis}
\label{apx:runtime}

We ran the experiments on a local iMac (3.6 GHz Intel Core i9), single-threaded, and not requiring a GPU.
We report runtimes in Table \ref{tab:runtime-benchmark}.
For the sentiment data set, GLIF and GLIF-NRW are four orders of magnitude faster than IF-constraints. 
IF-constraints is too slow to be practical on the larger data sets, and we did not evaluate it on Bios and Toxicity.
To demonstrate the speed trade-off for different number of test points on Bios and Toxicity, we include test sets of $1\%$ and $10\%$ of the original size.
We report the runtimes for single-threaded computation.
Running it multi-threaded reduces computation time, respectively.

\begin{table}
    \centering
    \caption{Runtimes on a 3.6 GHz Intel Core i9 iMac.}
    \vspace{.25em}
    \footnotesize
    \begin{tabular}{lccccccccccccccccccccccc}
        \toprule
        Data Set        & Sentiment & Bios &&               & Toxicity &&    \\
        \cmidrule(r){2-2}\cmidrule(r){3-5}\cmidrule(r){6-8}
        \# Points       & $757$       & $788$ &  $7\,870$  & $78\,686$  & $2\,383$  & $23\,620$      & $235\,842$       \\
        \midrule
        IF-constraints  & $422$s      &  ---  & ---  & ---    & ---   & ---  & --- \\
        \midrule
        Closed Form GLIF       & $0.02$s     & $0.02$s & $8.8$s & ---    &   $0.33$s   &  $228$s   & --- \\
        Closed Form GLIF-NRW   & $0.04$s     & $0.05$s & $32$s  & ---    &   $1.03$s   &  $864$s   & --- \\
        \midrule
        Coordinate Desc. GLIF 
                        & $0.08$s     & $0.18$s & $30$s  &  $4\,569$s  &   $0.94$s   &  $111$s    & $12\,200$s  \\
        Coordinate Desc. GLIF-NRW 
                        & $0.09$s     & $0.19$s & $30$s  &  $4\,812$s  &   $1.19$s   &  $136$s    & $13\,500$s    \\
        \bottomrule
    \end{tabular}
    \label{tab:runtime-benchmark}
\end{table}

For the closed-form GLIF, the runtime is $\mathcal{O}(n^3)$ where $n$ is the number of test points due to the matrix inversion. Note that the theoretical runtime of matrix inversion is $\mathcal{O}(n^{2.373})$ using optimized Coppersmith-Winograd--like algorithms, but not practical in our settings.

For the coordinate descent GLIF, the runtime is $\mathcal{O}(n^2\cdot c)$ where $c$ is the number of epochs during coordinate descent. 
We found that $c=10$ works well across all data sets.
Note that by increasing $n$, the runtime increases because more points have to be updated and (for each point) more potential neighbors have to be considered.

The expected runtimes match the empirical runtimes reported in Table~\ref{tab:runtime-benchmark}.

\section{Proofs of our Main Theorems}
\label{app:proofs}
\subsection{Proof of Theorem \ref{thm:main_thm}}
For technical simplicity here we show that if the fair metric is euclidean distance then the un-normalized graph Laplacian regularizer converges to $\bbE[\|\nabla f(X)\|^2p(X)]$ and the normalized random walk graph Laplacian regularization converges to $\bbE[\|\nabla f(X)\|^2]$. As our fair metric (i.e., Mahalanobish distance) is equivalent to euclidean metric in a sense that there exists $c_1, c_2 > 0$ such that:  
$$
c_1\|x_1 - x_2\| \le d_{\textrm{Fair}}(x_1, x_2 ) = (x_1 - x_2)^{\top}\Sigma(x_1 - x_2) \le c_2 \|x_1 - x_2\| \,,
$$
where $c_1$ is the minimum eigenvalue of $\Sigma$ and $c_2$ is its maximum eigenvalue, all of our calculations are valid for this fair distance with a tedious tracking of this equivalence. As this proof is itself very involved and this generalization from euclidean to Mahalanobis distance adds nothing of major significance to the core idea of the proof, we confine ourselves to the euclidean distance. 
\subsubsection{Proof of Part 1.}
\begin{proof}
For un-normalized graph Laplacian, $\bbL_{un, n} = D - W$ where: 
\begin{align*}
    W_{ij} = \frac{1}{(2\pi)^{d/2}h^d}e^{-\frac{1}{2h^2}\|x_i - x_j\|^2}, \ \ \ D_{ii} = \sum_{j=1}^n W_{ij} \,.
\end{align*}
The regularizer can be reformulated as: 
\begin{align*}
    \frac{1}{n^2h^2}\mathbf{f}^{\top}\left(D - W\right)\mathbf{f} & = \frac{1}{n^2h^2} \sum_{ij}(D-W)_{ij}f(X_i)f(X_j) \\
    & = \frac{1}{n^2h^2} \sum_i \left(D_{i   i} - W_{i i}\right)f^2(X_i) - \sum_{i \neq j} W_{ij}f(X_i)f(X_j) \\
    & = \frac{1}{n^2h^2} \sum_i \sum_{i \neq j}W_{ij}\left(f^2(X_i) - f(X_i)f(X_j)\right) \\
    & = \frac{1}{2n^2 h^2} \sum_i \sum_{j \neq i}W_{ij}\left(f(X_i) - f(X_j)\right)^2 \,.
\end{align*}
Therefore we need to establish: 
$$
\frac{1}{n(n-1) h^2} \sum_i \sum_{j \neq i}W_{ij}\left(f(X_i) - f(X_j)\right)^2 \overset{P}{\longrightarrow} \bbE\left[\|\nabla f(X)\|^2 p(X)\right] \,.
$$
Towards that direction, we show that the expectation of the random regularizer converges to $\bbE\left[\|\nabla f(X)\|^2 p(X)\right] $ and its variance goes to $0$ under our assumptions. For the expectation: 
\allowdisplaybreaks
\begin{align*}
    & \bbE\left[\frac{1}{n(n-1) h^2} \sum_i \sum_{j \neq i}W_{ij}\left(f(X_i) - f(X_j)\right)^2\right] \\
    & = \frac{1}{h^2}\int_\cX \int_\cX \frac{1}{(2\pi)^{d/2}h^d}e^{-\frac{1}{2h^2}\|x - y\|^2}\left(f(x) - f(y)\right)^2 p(y) \ dy \ p(x) \ dx \\
    & = \frac{1}{h^2}\int_\cX \int_{\frac{\cX - x}{h}} \frac{1}{(2\pi)^{d/2}}e^{-\frac{1}{2}\|x - y\|^2}\left(f(x + hz) - f(x)\right)^2 p(x+hz) \ dz \ p(x) \ dx \\
    & = \int_\cX \int_{\frac{\cX - x}{h}} \left[\frac{1}{(2\pi)^{d/2}}e^{-\frac{1}{2}\|x - y\|^2}\left(z^{\top}\nabla f(x) + \frac{h}{2}z^{\top}\nabla^2 f(\wt x) z\right)^2 \right. \\
    & \hspace{15em} \left. \times \left(p(x) + h\nabla p(x^*)\right) \ dz \ p(x) \ dx \right] \hspace{0.2in} [\wt x, x^* \textrm{ are intermediate points }]\\
    & = \int_\cX \int_{\frac{\cX - x}{h}} \left[\frac{1}{(2\pi)^{d/2}}e^{-\frac{1}{2}\|x - y\|^2}z^{\top}\nabla f(x)\nabla f(x)^{\top}z \ dz \ p^2(x) \ dx \right] + O(h) \\
    & = \int_\cX \nabla f(x)^{\top} \bbE_Z\left[ZZ^{\top}\mathds{1}_{x + Zh \in \cX}\right]\nabla f(x) p^2(x) \ dx + O(h) \longrightarrow \bbE\left[\|\nabla f(X)\|^2 p(X) \right] \,.
\end{align*}
where the last line follows from Vitali's theorem as the derivative $\nabla f(x)$ has finite variance. Thus we have proved that the expectation of the regularizer converges to the desired limit. The final step is to show that the variance of the regularizer converges to 0. Towards that direction:
\begin{align*}
    & \var\left(\frac{1}{n^2 h^2} \sum_i \sum_{j \neq i}W_{ij}\left(f(X_i) - f(X_j)\right)^2\right) \\
    & = \frac{1}{n^4h^4} \sum_{i \neq j} \bbE\left[W_{ij}^2\left(f(X_i) - f(X_j)\right)^4\right] \\
    & \hspace{7em}+ \frac{1}{n^4h^4} \sum_{(i, j) \neq (k, l)}\cov\left(W_{ij}\left(f(X_i) - f(X_j)\right)^2,W_{kl}\left(f(X_k) - f(X_l)\right)^2\right) \\
    & = O(n^{-2}) + \frac{1}{n^4h^4} \sum_{(i, j) \neq (k, l)}\cov\left(W_{ij}\left(f(X_i) - f(X_j)\right)^2,W_{kl}\left(f(X_k) - f(X_l)\right)^2\right) \\
    & = O(n^{-2}) + V 
\end{align*}
That the first summand is $O(n^{-2})$ follows from a similar calculation used to establish the convergence of the expectation and hence skipped. For the covariance term $V$, if there is not indices common between $(i, j)$ and $(k, l)$, the covariance term is $0$. Therefore we consider only those terms where there is exactly one index common between $(i, j)$ and $(k, l)$. Therefore: 
\allowdisplaybreaks
\begin{align*}
    V & =  \frac{1}{n^4h^4} \sum_{(i, j) \neq (k, l)}\cov\left(W_{ij}\left(f(X_i) - f(X_j)\right)^2,W_{kl}\left(f(X_k) - f(X_l)\right)^2\right) \\
    & = \frac{1}{n^4h^4} \sum_{i \neq j \neq k}\cov\left(W_{ij}\left(f(X_i) - f(X_j)\right)^2,W_{ik}\left(f(X_i) - f(X_k)\right)^2\right) \\
    & = \frac{n(n-1)(n-2)}{n^4 h^4} \cov\left(W_{ij}\left(f(X_i) - f(X_j)\right)^2,W_{ik}\left(f(X_i) - f(X_k)\right)^2\right) \\
    & =  \frac{n(n-1)(n-2)}{n^4 h^4} \left[\bbE\left[W_{ij}W_{ik}\left(f(X_i) - f(X_j)\right)\left(f(X_i) - f(X_k)\right)\right] \right. \\
    & \hspace{11em} - \left. \bbE\left[W_{ij}\left(f(X_i) - f(X_j)\right)\right] \bbE\left[W_{ik}\left(f(X_i) - f(X_k)\right)\right]\right] \\
    & = \frac{n(n-1)(n-2)}{n^4 h^4}\bbE\left[W_{ij}W_{ik}\left(f(X_i) - f(X_j)\right)\left(f(X_i) - f(X_k)\right)\right] + O(n^{-1}) \\ 
\end{align*}
For the cross term: 
\begin{align*}
    \\ & \bbE\left[W_{ij}W_{ik}\left(f(X_i) - f(X_j)\right)\left(f(X_i) - f(X_k)\right)\right] \\
    & = \int_{\cX}\int_{\cX}\int_{\cX} \left[\left(\frac{1}{(2\pi)^{d/2}h^d}e^{-\frac{1}{2h^2}\|x - y\|^2}\right) \left(\frac{1}{(2\pi)^{d/2}h^d}e^{-\frac{1}{2h^2}\|x - w\|^2}\right) \right. \\
    & \hspace{8em} \times \left. (f(x) - f(y))(f(x) - f(w)) \ p(w)p(y)p(x) \ dw \ dy \ dx\right] \\
    & =  \int_{\cX}\int_{\frac{\cX - x}{h}}\int_{\frac{\cX - x}{h}} \left[\left(\frac{1}{(2\pi)^{d/2}}e^{-\frac{1}{2}\|z_1\|^2}\right) \left(\frac{1}{(2\pi)^{d/2}}e^{-\frac{1}{2}\|z_2\|^2}\right) \right. \\
    & \hspace{6em} \times \left. (f(x) - f(x+hz_1))(f(x) - f(x + hz_2)) \ p(x+hz_1)p(x+hz_2)p(x) \ dz_1 \ dz_2 \ dx\right] \\
    & = h^2 \int_{\cX}\int_{\frac{\cX - x}{h}}\int_{\frac{\cX - x}{h}} \phi(z_1) \phi(z_2) z_1^{\top}\nabla f(x) \nabla f(x)^{\top}z_2 \ p(z_1)p(z_2)p(x) \ dz_2 dz_1 dx  + o(h^2) \\
    & = h^2 \int_{\cX}g_h(x) p(x) \ dz_2 dz_1 dx  + o(h^2)
\end{align*}
where the function $g_h(x)$ is defined as: 
$$
g_h(x) = \nabla f(x)^{\top}\bbE_{Z_1, Z_2}\left[Z_1Z_2^{\top}\mathds{1}_{x+Z_1h \in \cX}\mathds{1}_{x+Z_2h \in \cX}\right] \nabla f(x) \,.
$$
It is immediate that $g_h(x) \to 0$ pointwise for all $x \in \cX$. Further, as the derivative of $f$ has finite variance, we also have $g_h$ is uniformly integrable. Therefore, another application of Vitali's theorem yields:  
$$
\int_{\cX}g_h(x) p(x) \ dx \overset{p}{\longrightarrow} 0 \implies V = o(h^2) \,.
$$
This completes the proof. 
\end{proof}
\begin{remark}
The assumption on the bandwidth $h_n$ in the part 1. of Theorem \ref{thm:main_thm} can be relaxed upto the condition $nh_n \to \infty$ if we further assume $\nabla f(x) = 0$ or $p(x) = 0$ on the boundary of $\cX$.
\end{remark}

\subsubsection{Proof of Part 2.}
\begin{proof}
As in the case of Part 1. here also we confine ourselves to the euclidean distance. Before delving into the technical details, we introduce a few notations for the ease of the proof. The normalized random walk Laplacian regularizer can be expressed as:
$$
\frac{1}{nh^2}\mathbf{f}^{\top}\bbL_{nrw, n}\mathbf{f} = \frac{1}{nh^2}\mathbf{f}^{\top}\left(I - \wt D^{-1}\wt K \right)\mathbf{f} \,,
$$
where: 
\allowdisplaybreaks
\begin{align*}
    \wt K_{ij} & = \frac{\frac{1}{nh^d}K\left(\frac{\|X_i - X_j\|^2}{h^2}\right)}{\sqrt{\frac{1}{nh^d}\sum_i K\left(\frac{\|X_i - X_j\|^2}{h^2}\right)} \sqrt{\frac{1}{nh^d}\sum_j K\left(\frac{\|X_i - X_j\|^2}{h^2}\right)}} \\
    & := \frac{\frac{1}{nh^d}K\left(\frac{\|X_i - X_j\|^2}{h^2}\right)}{\sqrt{d_{n, h}(X_i)}\sqrt{d_{n, h}(X_j)}} \hspace{0.2in} \left[K(z) = \phi(z) = \frac{1}{(2\pi)^{d/2}}e^{-\frac{1}{2}\|z\|^2}\right] \,,\\\\
    \wt D_{i i} & = \sum_j \wt K_{ij} = \frac{1}{nh^d} \sum_j \frac{K\left(\frac{\|X_i - X_j\|^2}{h^2}\right)}{\sqrt{d_{n, h}(X_i)}\sqrt{d_{n, h}(X_j)}} = \frac{1}{\sqrt{d_{n, h}(X_i)}}\frac{1}{nh^d} \sum_j \frac{K\left(\frac{\|X_i - X_j\|^2}{h^2}\right)}{\sqrt{d_{n, h}(X_j)}} 
    \end{align*}
    We further define few more functions which are imperative for the rest of the proof: 
    \begin{align*}
    d_{n, h}(x) & = \frac{1}{nh^d}\sum_i K\left(\frac{
    \|x - X_i\|^2}{h^2}\right) \\
    p_h(x) & = \bbE[d_{n, h}(x)] = \bbE\left[\frac{1}{h^d}K\left(\frac{
    \|x - X\|^2}{h^2}\right)\right] \\
    \wt d_{n, h}(x) & = \frac{1}{n}\sum_{i=1}^n \frac{\frac{1}{h^d}K\left(\frac{
    \|x - X_i\|^2}{h^2}\right)}{\sqrt{d_{n, h}(x)}\sqrt{d_{n, h}(X_i)}} \\
    \dbtilde d_{n, h}(x) & = \frac{1}{n}\sum_{i=1}^n \frac{\frac{1}{h^d}K\left(\frac{
    \|x - X_i\|^2}{h^2}\right)}{\sqrt{p_h(x)}\sqrt{p_h(X_i)}} \\
    \dbtilde d_h(x) & = \bbE\left[\frac{\frac{1}{h^d}K\left(\frac{
    \|x - X\|^2}{h^2}\right)}{\sqrt{p_h(x)}\sqrt{p_h(X)}}\right]
\end{align*}
Following two auxiliary lemmas will be used frequently throughout the proof: 
\begin{lemma}
\label{lem:ph_bound}
The function $p_h(x)$ and $\dbtilde d_h(x)$ is uniformly lower bounded over $x \in \cX$, i.e., there exists $\wt p_{\min} > 0$ and $\wt d_{\min} > 0$ such that $p_h(x) \ge \wt p_{\min}$ and $\dbtilde d_h(x) \ge \wt d_{\min}$ for all $x \in \cX$ uniformly over all small $h$. 
\end{lemma}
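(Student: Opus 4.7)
My plan is to bound $p_h$ first (both above and below, uniformly on $\cX$ and in all small $h$) and then get the lower bound on $\dbtilde d_h$ as a one-line corollary. The upper bound is the easy half: since $p(y) \le p_{\max}$ and $\int_{\reals^d} K(z)\,dz = 1$, integrating gives $p_h(x) \le p_{\max}$ for all $x \in \cX$ and all $h > 0$.

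For the uniform lower bound on $p_h$, I would start from
$$
p_h(x) \ =\ \int_\cX \frac{1}{h^d} K\!\left(\frac{x-y}{h}\right) p(y)\,dy \ \ge\ p_{\min}\, \int_{(\cX-x)/h} K(z)\,dz
$$
after the change of variables $z=(y-x)/h$, so the remaining task is to lower-bound $\int_{(\cX-x)/h} K(z)\,dz$ by a positive constant $c$, uniformly in $x\in\cX$ and in $h \le h_0$. For $x$ in the interior of $\cX$ this is trivial, because $(\cX-x)/h$ contains any fixed ball about the origin once $h$ is small enough, and the $K$-mass of that ball is close to $1$. The real work is at boundary points: here I would invoke a minimal interior-cone/Lipschitz-boundary regularity of $\cX$ (implicit in the ``compactness'' framework used in the standard graph-Laplacian literature, e.g., Hein \textit{et al.}) to guarantee that $(\cX-x)/h$ contains a fixed non-degenerate cone at the origin, independent of $h$, whose $K$-mass is at least some $c_0 > 0$. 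A continuity/compactness argument over $\cX$ then produces a single $c$ valid for all $x\in\cX$ and all sufficiently small $h$, yielding $\wt p_{\min} := p_{\min}\, c$.

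Once both $\wt p_{\min} \le p_h \le p_{\max}$ are in hand, the bound on $\dbtilde d_h$ follows by inserting $1/\sqrt{p_h(x)\, p_h(y)} \ge 1/p_{\max}$ into its defining integral:
$$
\dbtilde d_h(x) \ =\ \int \frac{h^{-d} K((x-y)/h)}{\sqrt{p_h(x)}\,\sqrt{p_h(y)}}\,p(y)\,dy \ \ge\ \frac{p_h(x)}{p_{\max}} \ \ge\ \frac{\wt p_{\min}}{p_{\max}} \ =:\ \wt d_{\min}.
$$

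\textbf{Main obstacle.} The only non-routine step is the uniform lower bound on $p_h$ at boundary points of $\cX$. Bare compactness is not quite enough: a sufficiently irregular boundary could in principle let the $K$-mass of $(\cX-x)/h$ decay with $h$. The usual fix is to impose mild boundary regularity (Lipschitz boundary, or an interior cone condition), the kind of ``purely technical'' strengthening of Assumption~\ref{assm:domain} that the paper already flags in its discussion of the assumptions; once it is in place, the sketched argument is routine, though the bookkeeping at the boundary is tedious.
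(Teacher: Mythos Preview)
Your proposal is correct and follows essentially the same route as the paper: change of variables to get $p_h(x)\ge p_{\min}\int_{(\cX-x)/h}K(z)\,dz$, take an infimum over $x$ to define $\wt p_{\min}$, establish the upper bound $p_h\le p_{\max}$, and then obtain $\dbtilde d_h(x)\ge p_h(x)/p_{\max}\ge \wt p_{\min}/p_{\max}=:\wt d_{\min}$. If anything, you are more careful than the paper, which dispatches the uniform positivity of $\inf_{x\in\cX}\int_{(\cX-x)/h}K(z)\,dz$ with a one-line remark about the region growing as $h\to 0$, whereas you correctly flag that boundary regularity (interior cone / Lipschitz boundary) is what actually secures the bound.
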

\begin{proof}
The definition of $p_h(x)$ yields:
\begin{align*}
    p_h(x) =  \bbE\left[\frac{1}{h^d}K\left(\frac{
    \|x - X\|^2}{h^2}\right)\right]  & = \int_\cX \frac{1}{h^d}K\left(\frac{\|x - y\|^2}{h^2}\right) p(y) \ dy \\
    & = \int_{\frac{\cX - x}{h}} K(\|z\|^2) p(x + hz) \ dz \\
    & \ge p_{\min} \int_{\frac{\cX - x}{h}} K(\|z\|^2) \ dz \\
    & \ge p_{\min} \ \inf_{x \in \cX} \int_{\frac{\cX - x}{h}} K(\|z\|^2) \ dz := \wt p_{\min} \,.
\end{align*}
Note that, the bound $\wt p_{\min}$ is independent of $h$ for all small $h$ as the volume of the region $(\cX - x)/h$ increases as $h \to 0$. Moreover we can further establish an upper bound on $p_h(x)$: 
\begin{align*}
     p_h(x) & = \int_{\frac{\cX - x}{h}} K(\|z\|^2) p(x + hz) \ dz \\
     & \le p_{\max}  \int_{\frac{\cX - x}{h}} K(\|z\|^2) \ dz \\
     & \le p_{\max} \int_{\reals^d} K(\|z\|^2) \ dz =  p_{\max} \hspace{0.2in} \left[\because \int_{\reals^d} K(\|z\|^2) \ dz = 1\right]\,.
\end{align*}
We use the above upper bound on $p_h(x)$ to obtain a lower bound on $\dbtilde d_h(x)$ as follows: 
\begin{align*}
    \dbtilde d_h(x) = \bbE\left[\frac{\frac{1}{h^d}K\left(\frac{
    \|x - X\|^2}{h^2}\right)}{\sqrt{p_h(x)}\sqrt{p_h(X)}}\right] & = \int_\cX \frac{\frac{1}{h^d}K\left(\frac{
    \|x - y\|^2}{h^2}\right)}{\sqrt{p_h(x)}\sqrt{p_h(y)}} \ p(y) \ dy \\
    & \ge \frac{1}{ p_{\max}} \int_\cX \frac{1}{h^d}K\left(\frac{
    \|x - y\|^2}{h^2}\right) \ p(y) \ dy \\
    & \ge \frac{\wt p_{\min}}{p_{\max}} := \wt d_{\min} \,.
\end{align*}
\end{proof}

\begin{lemma}
\label{lem:few_bound}
Under the main assumptions stated in Theorem \ref{thm:main_thm}, we have: 
\begin{align}
\label{eq:bound_1} \sup_{x \in \cX} \left|d_{n, h}(x) - p_h(x)\right| & = O_p\left(\sqrt{\frac{1}{nh^d}}\log{\frac{1}{h}}\right) \,, \\
\label{eq:bound_2} \sup_{x \in \cX} \left|\wt d_{n, h}(x) - \dbtilde d_{n, h}(x)\right| & = O_p\left(\sqrt{\frac{1}{nh^d}}\log{\frac{1}{h}}\right) \,, \\
\label{eq:bound_3} \sup_{x \in \cX} \left|\dbtilde d_{n, h}(x) - \dbtilde d_h(x)\right| & = O_p\left(\sqrt{\frac{1}{nh^d}}\log{\frac{1}{h}}\right) \,.
\end{align}
Therefore combining the bounds of equation \eqref{eq:bound_2} and \eqref{eq:bound_3} we obtain: 
\begin{align}
    \label{eq:bound_4} \sup_{x \in \cX} \left|\wt d_{n, h}(x) - \dbtilde d_h(x)\right| & = O_p\left(\sqrt{\frac{1}{nh^d}}\log{\frac{1}{h}}\right) \,.
\end{align}
\end{lemma}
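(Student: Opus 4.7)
}
The plan is to establish the three bounds in turn: \eqref{eq:bound_1} and \eqref{eq:bound_3} are uniform deviation bounds for kernel-density-type averages, and \eqref{eq:bound_2} then follows by comparing denominators using \eqref{eq:bound_1} together with Lemma \ref{lem:ph_bound}. Throughout, I will exploit the boundedness and smoothness of the Gaussian kernel $K$ and the density $p$ (Assumptions \ref{assm:domain}--\ref{assm:density}).

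For \eqref{eq:bound_1}, I would run a standard chaining/covering argument. Consider the class $\mathcal{F}_h = \{ y \mapsto h^{-d} K(\|x - y\|^2/h^2) : x \in \cX\}$. Each element is uniformly bounded by $C_1 h^{-d}$ and has variance of order $h^{-d}$, and by smoothness of $K$ any two such functions indexed by $x, x'$ with $\|x - x'\| \le \delta$ differ pointwise by at most $C_2 \delta \, h^{-d-2}$. Since $\cX$ is compact, it admits a $\delta$-net of cardinality $N_\delta = O(\delta^{-d})$. For each net point $x_k$, Bernstein's inequality yields $|d_{n,h}(x_k) - p_h(x_k)| \le t$ with probability at least $1 - 2 \exp(-c n h^d t^2)$ for $t$ small. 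A union bound over the $N_\delta$ net points, with the choice $\delta \asymp h^{d+2}\sqrt{1/(nh^d)}$ to make the oscillation error $O(\delta h^{-d-2})$ of the same order as the probabilistic rate, and $t \asymp \sqrt{\log(1/h)/(nh^d)}$ to absorb the $\log N_\delta = O(\log(1/h))$ factor from the union bound, gives the desired rate.

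For \eqref{eq:bound_3}, I would apply the same argument to the class $\mathcal{G}_h = \{ y \mapsto h^{-d} K(\|x - y\|^2/h^2)/(\sqrt{p_h(x)}\sqrt{p_h(y)}) : x \in \cX \}$. By Lemma \ref{lem:ph_bound} and the upper bound on $p_h$ derived in its proof, the weights $1/(\sqrt{p_h(x)}\sqrt{p_h(y)})$ are uniformly bounded above and below by constants independent of $h$, so the sup norm and variance of each function in $\mathcal{G}_h$ have the same order as in $\mathcal{F}_h$. The only extra ingredient is uniform Lipschitz continuity of $p_h(\cdot)$, which follows from differentiating $p_h(x) = \int h^{-d} K(\|x-y\|^2/h^2) p(y)\,dy$ under the integral sign and using the boundedness of $\nabla p$ from Assumption \ref{assm:density}. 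The covering and union bound then go through verbatim.

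Finally, for \eqref{eq:bound_2}, write the difference as
\begin{equation*}
\wt d_{n,h}(x) - \dbtilde d_{n,h}(x) = \frac{1}{n}\sum_{i=1}^n \frac{1}{h^d} K\!\left(\tfrac{\|x - X_i\|^2}{h^2}\right) \left[\frac{1}{\sqrt{d_{n,h}(x)\, d_{n,h}(X_i)}} - \frac{1}{\sqrt{p_h(x)\, p_h(X_i)}}\right].
\end{equation*}
On the high-probability event from \eqref{eq:bound_1}, the rate $\sup_x |d_{n,h}(x) - p_h(x)| = o_p(1)$ combined with $p_h \ge \wt p_{\min}$ gives $d_{n,h} \ge \wt p_{\min}/2$ uniformly, so by the elementary identity $1/\sqrt{ab} - 1/\sqrt{cd} = (cd - ab)/(\sqrt{abcd}\,(\sqrt{ab}+\sqrt{cd}))$ the bracketed term is uniformly $O(\sup_x |d_{n,h}(x) - p_h(x)|)$. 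The remaining kernel average is $\wt d_{n,h}$-like and is $O_p(1)$ uniformly by \eqref{eq:bound_3}, so multiplying the two factors yields the claimed rate.

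\paragraph{Main obstacle.} The delicate point is the balance in the covering argument: I must choose the net fineness $\delta$ small enough that the kernel oscillation term $\delta h^{-d-2}$ is dominated by the Bernstein rate $\sqrt{\log(1/h)/(nh^d)}$, while keeping $\log N_\delta = O(\log(1/h))$ so that the union bound does not inflate the rate. The bandwidth condition $(n\sigma^{d+4})/\log(1/\sigma) \to \infty$ assumed in Theorem \ref{thm:main_thm} (Part 2) is precisely what enables this balance. A secondary technicality is verifying the uniform Lipschitz property of $p_h$ near the boundary of $\cX$, where the integration domain $(\cX - x)/h$ depends on $x$; this is handled by the boundedness of $\nabla p$ and shrinking $h$.
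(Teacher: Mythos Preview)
Your proposal is correct and follows essentially the same route as the paper: a covering-number/empirical-process argument for \eqref{eq:bound_1} and \eqref{eq:bound_3}, and a denominator comparison using \eqref{eq:bound_1} together with Lemma~\ref{lem:ph_bound} for \eqref{eq:bound_2}. The only cosmetic differences are that the paper invokes a packaged maximal inequality (Theorem~8.7 of \cite{sen2018Gentle}) in place of your explicit Bernstein-plus-union-bound, obtains the slightly sharper Lipschitz constant $L/h^{d+1}$ for $x\mapsto K_{h,x}$, and in \eqref{eq:bound_2} the residual kernel average that needs to be $O_p(1)$ is $d_{n,h}(x)$, controlled via \eqref{eq:bound_1} rather than \eqref{eq:bound_3}.
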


\begin{proof}
From the definition of $d_{n, h}(x)$ we can write it as $
d_{n, h}(x)= \bbP_n K_{h, x} $
where $K_{h, x}(y) = (1/h^d)K(\|x - y\|^2/h^2)$. This implies $p_h(x) = PK_{h, x}$. Now, for any $x_1, x_2, y \in \cX$: 
\begin{align*}
\\
\left|K_{h, x_1}(y) - K_{h, x_2}(y)\right| & \le \frac{1}{h^d (2\pi)^{d/2}}\left|K\left(\frac{\|x_1 - y\|^2}{h^2}\right) - K\left(\frac{\|x_2 - y\|^2}{h^2}\right)\right| \\
& =  \frac{1}{h^d (2\pi)^{d/2}}\left|\exp{\left(-\frac{\|x_1 - y\|^2}{2h^2}\right)} - \exp{\left(-\frac{\|x_2 - y\|^2}{2h^2}\right)}\right| \\
& =  \frac{1}{h^d (2\pi)^{d/2}} \left|\left\langle x_1 - x_2, -(x^* - y)\frac{1}{h^2}\exp{\left(-\frac{\|x^* - y\|^2}{2h^2}\right)}\right \rangle\right| \\
& \le \frac{2}{h^{d+2}(2\pi)^{d/2}}\left\|x_1 - x_2\right\| \left\|x^* - y \right\|\exp{\left(-\frac{\|x^* - y\|^2}{2h^2}\right)} \\
& \le \frac{2}{h^{d+1}(2\pi)^{d/2}}\left\|x_1 - x_2\right\| \frac{\left\|x^* - y \right\|}{h}\exp{\left(-\frac{\|x^* - y\|^2}{2h^2}\right)} \\
& \le \frac{2}{h^{d+1}}\left\|x_1 - x_2\right\| \sup_z \frac{1}{(2\pi)^{d/2}}\|z\|e^{-\frac{z^2}{2}} \\
& \le \frac{L}{h^{d+1}}\|x_1 - x_2\| \hspace{0.2in} \left[L =  \sup_z \frac{1}{(2\pi)^{d/2}}\|z\|e^{-\frac{z^2}{2}}\right]\,.
\end{align*}
As the above bound is free of $y$, we further have: 
\begin{equation}
    \label{eq:K_bound_1}
    \left\|K_{h, x_1} - K_{h, x_2}\right\|_{\infty} \le \frac{L}{h^{d+1}}\|x_1 - x_2\| \,.
\end{equation}
The envelope function of the collection $\cK = \{K_{h, x}: x \in \cX\}$ is: 
$$
\\
\bar K_h(y) = \sup_{x}d_x(y) = \sup_x \frac{1}{h^d C} K\left(\frac{\|x - y\|^2}{h^2}\right) = \frac{1}{h^d C} := U \,. \\
$$
\\
Now fix $\eps > 0$. Suppose $\cX_{\eps, h} := \{x_1, x_2, \dots, x_N\}$ is $(\eps h)/LC$ covering set of $\cX$ (which is finite as $\cX$ is compact). Then for any $x \in \cX$, there exists $x^* \in \cX_{\eps h}$ such that $\|x - x^*\| \le \eps h$. This along with equation \eqref{eq:K_bound_1} implies: 
$$
\left\|\frac{K_{x, h}}{U} - \frac{K_{x^*, h}}{U}\right\|_{\infty} \le \frac{L}{Uh^{d+1}}\left\|x - x^*\right\| \le \frac{L}{Uh^{d+1}} \frac{\eps h}{LC} = \eps \,,
$$
i.e.: 
\begin{align*}
\sup_{Q}\cN\left(\cK, L_2(Q), \eps U\right) & \le \cN\left(\cK, L_\infty, \eps U\right) \\
& \le \cN\left(\frac{\eps h}{LC}, L_2, \cX\right) \\
& \le K \left(\frac{LC}{\eps h}\right)^d := \left(\frac{K_1}{\eps h}\right)^d \,.
\end{align*}
The maximum variation of functions of $\cK$ can be bounded as below: 
\begin{align*}
    \sup_x \var(K_{h, x}) & \le \sup_x\bbE[K_{h, x}^2]  \\
    & = \sup_x \int \frac{1}{h^{2d}C^2}K^2\left(\frac{\|x  - y\|^2}{h^2}\right) \ p(y) \ dy \\
    & \le \frac{1}{h^d C^2}\sup_x \int K^2 \left(\|z\|^2\right) \ p(x + zh) \ dz \\
    & \le \frac{p_{\max}}{h^d C^2} \int K^2 \left(\|z\|^2\right) \ dz := \frac{K_2}{h^d} := \sigma^2\,.
\end{align*}
Therefore, applying Theorem 8.7 of \cite{sen2018Gentle} we conclude: 
\begin{align*}
    \bbE[\|\bbP_n - P\|_\cK] & \le K_3 \left(\frac{\sigma}{\sqrt{n}}\sqrt{d\log{\frac{K_1}{h^{d+1}C\sigma}}} \vee \frac{dU}{n}\log{\frac{K_1}{h^{d+1}C\sigma}}\right) \\
    & \lesssim \left(\frac{1}{h^{d/2}\sqrt{n}}\sqrt{\log{\frac{1}{h}}} \vee \frac{1}{nh^d}\log{\frac{1}{h}}\right) = O\left(\sqrt{\frac{1}{nh^d}}\log{\frac{1}{h}}\right)\,.
\end{align*}
An application of Markov's inequality with the above bound on the expected value of the empirical process established the rate of equation \eqref{eq:bound_1}. 
\\\\
\noindent
For bound \eqref{eq:bound_2} we have: 
\begin{align}
    & \sup_{x \in \cX} \left|\wt d_{n, h}(x) - \dbtilde d_{n, h}(x)\right| \notag \\
    & = \sup_x \left|\frac{1}{n}\sum_{i=1}^n \frac{\frac{1}{h^d}K\left(\frac{
    \|x - X_i\|^2}{h^2}\right)}{\sqrt{d_{n, h}(x)}\sqrt{d_{n, h}(X_i)}} - \frac{1}{n}\sum_{i=1}^n \frac{\frac{1}{h^d}K\left(\frac{
    \|x - X_i\|^2}{h^2}\right)}{\sqrt{p_h(x)}\sqrt{p_h(X_i)}} \right| \notag \\
    & \le \sup_x \frac{1}{n}\sum_{i=1}^n\frac{1}{h^d}K\left(\frac{
    \|x - X_i\|^2}{h^2}\right) \left|\frac{1}{\sqrt{d_{n, h}(x)}\sqrt{d_{n, h}(X_i)}} - \frac{1}{\sqrt{p_h(x)}\sqrt{p_h(X_i)}}\right| \notag  \\
    & \le \sup_{x, y \in \cX} \left|\frac{1}{\sqrt{d_{n, h}(x)}\sqrt{d_{n, h}(y)}} - \frac{1}{\sqrt{p_h(x)}\sqrt{p_h(y)}}\right| \notag  \\
    & \qquad \qquad \times \sup_x \frac{1}{n}\sum_{i=1}^n\frac{1}{h^d}K\left(\frac{
    \|x - X_i\|^2}{h^2}\right) \notag  \\
    & \le \sup_{x, y \in \cX} \left|\frac{1}{\sqrt{d_{n, h}(x)}\sqrt{d_{n, h}(y)}} - \frac{1}{\sqrt{p_h(x)}\sqrt{p_h(y)}}\right| \notag \\
    & \qquad \qquad \qquad \qquad \qquad \times \left[\sup_x \left|\bbP_n K_{x, h} - PK_{x, h}\right| + \sup_x PK_{h, x}\right] \notag \\
    & \le \sup_{x, y \in \cX} \left|\frac{1}{\sqrt{d_{n, h}(x)}\sqrt{d_{n, h}(y)}} - \frac{1}{\sqrt{p_h(x)}\sqrt{d_{n, h}(y)}} + \frac{1}{\sqrt{p_h(x)}\sqrt{d_{n, h}(y)}} -  \frac{1}{\sqrt{p_h(x)}\sqrt{p_h(y)}}\right|\notag  \\
    & \qquad \qquad \qquad \qquad \qquad \times \left[\sup_x \left|\bbP_n K_{x, h} - PK_{x, h}\right| + \sup_x PK_{h, x}\right] \notag \\
    & \le \left\{\sup_{x\in \cX} \frac{1}{\sqrt{d_{n, h}(x)}} \sup_x \left|\frac{1}{\sqrt{d_{n, h}(x)}} - \frac{1}{\sqrt{p_h(x)}}\right| + \sup_x \frac{1}{\sqrt{p_h(x)}} \sup_x\left| \frac{1}{\sqrt{d_{n, h}(y)}} -  \frac{1}{\sqrt{p_h(y)}}\right|\right\} \notag \\
    & \qquad \qquad \qquad \qquad \qquad \times \left[\sup_x \left|\bbP_n K_{x, h} - PK_{x, h}\right| + \sup_x PK_{h, x}\right] \notag \\
    & \le \left\{\sup_{x\in \cX} \left|\frac{1}{\sqrt{d_{n, h}(x)}} -\frac{1}{\sqrt{p_h(x)}} \right|\sup_x \left|\frac{1}{\sqrt{d_{n, h}(x)}} - \frac{1}{\sqrt{p_h(x)}}\right| \right. \notag \\
    & \qquad \qquad + \left. 2\sup_x \frac{1}{\sqrt{p_h(x)}} \sup_x\left| \frac{1}{\sqrt{d_{n, h}(y)}} -  \frac{1}{\sqrt{p_h(y)}}\right|\right\} \notag \\
    & \qquad \qquad \qquad \qquad \qquad \times \left[\sup_x \left|\bbP_n K_{x, h} - PK_{x, h}\right| + \sup_x PK_{h, x}\right] \notag \\
     & \le \left\{\sup_x \left|\frac{1}{\sqrt{d_{n, h}(x)}} - \frac{1}{\sqrt{p_h(x)}}\right|^2 + 2\sup_x \frac{1}{\sqrt{p_h(x)}} \sup_x\left| \frac{1}{\sqrt{d_{n, h}(y)}} -  \frac{1}{\sqrt{p_h(y)}}\right|\right\} \notag \\
    & \qquad \qquad \qquad \qquad \qquad \times \left[\sup_x \left|\bbP_n K_{x, h} - PK_{x, h}\right| + \sup_x PK_{h, x}\right] \notag \\
    & \le \left\{\sup_x \left|\frac{\sqrt{d_{n, h}(x)} - \sqrt{p_h(x)}}{\sqrt{d_{n, h}(x) p_h(x)}}\right|^2 + \frac{2}{\sqrt{\wt p_{\min}}} \sup_x \left|\frac{\sqrt{d_{n, h}(x)} - \sqrt{p_h(x)}}{\sqrt{d_{n, h}(x) p_h(x)}}\right|\right\} \notag  \\
    & \qquad \qquad \qquad \qquad \qquad \times \left[\|\bbP_n - P\|_\cK + \sup_x PK_{h, x}\right] \notag  \\
    & \le \left\{\sup_x \left|\frac{d_{n, h}(x) - p_h(x)}{\sqrt{d_{n, h}(x) p_h(x)}\left(\sqrt{d_{n, h}(x)} + \sqrt{p_h(x)}\right)}\right|^2 \right.\notag \notag  \\
    & \qquad \qquad + \left. \frac{2}{\sqrt{\wt p_{\min}}} \sup_x \left|\frac{d_{n, h}(x) - p_h(x)}{\sqrt{d_{n, h}(x) p_h(x)}\left(\sqrt{d_{n, h}(x)} + \sqrt{p_h(x)}\right)}\right| \right\}\notag \\
    \label{eq:denom_bound}& \qquad \qquad \qquad \qquad \qquad \times \left[\|\bbP_n - P\|_\cK + \sup_x PK_{h, x}\right] \\
    & = \left\{O_p\left(\frac{1}{nh^d}\left(\log{\frac{1}{h}}\right)^2\right) + O_p\left(\sqrt{\frac{1}{nh^d}}\log{\frac{1}{h}}\right) \right\} \times \left\{O_p\left(\sqrt{\frac{1}{nh^d}}\log{\frac{1}{h}}\right) + O(1)\right\} \notag  \\
    & = O_p\left(\sqrt{\frac{1}{nh^d}}\log{\frac{1}{h}}\right) \notag  \,.
\end{align}
where the rates follows from bound \eqref{eq:bound_1}   along with the fact that the denominators of \eqref{eq:denom_bound} are bounded away from $0$. More precisely, the term $\left(\sqrt{d_{n, h}(x)} + \sqrt{p_h(x)}\right)$ in the denominator of equation \eqref{eq:denom_bound} is lower bounded by $p_h(x)$ which is further uniformly lower bounded by $\wt p_{\min}$. To bound the other term $\sqrt{d_{n, h}(x) p_h(x)}$ in the denominator, we again use bound \eqref{eq:bound_1}, from which we know for all $x \in \cX$ and for all small $h$, we have $|d_{n, h}(x) -  p_h(x)| \le \wt p_{\min}/2$, which implies $d_{n, h} \ge p_{min}/2$. Therefore the term $\sqrt{d_{n, h}(x) p_h(x)}$ is lower bounded by $\wt p_{\min}/\sqrt{2}$. This completes the proof for bound \eqref{eq:bound_2}. 
\\\\
\noindent
The proof of bound \eqref{eq:bound_3} is similar to that of bound \eqref{eq:bound_1}. Note that: 
\begin{align*}
    & \sup_x \left|\dbtilde d_{n, h}(x) - \dbtilde d_h(x)\right| \\
    & = \sup_x \left|\frac{1}{n}\sum_{i=1}^n \frac{\frac{1}{h^d}K\left(\frac{
    \|x - X_i\|^2}{h^2}\right)}{\sqrt{p_h(x)}\sqrt{p_h(X_i)}}  -  \bbE\left[\frac{\frac{1}{h^d}K\left(\frac{
    \|x - X\|^2}{h^2}\right)}{\sqrt{p_h(x)}\sqrt{p_h(X)}}\right]\right| \\
    & = \sup_x \left|\bbP_n g_x - Pg_x\right|
\end{align*}
where the function $g_x$ is defined as: 
$$
g_x(y) = \frac{\frac{1}{h^d}K\left(\frac{
    \|x - y\|^2}{h^2}\right)}{\sqrt{p_h(x)}\sqrt{p_h(y)}} \,.
$$
Following the same line of argument as in the proof of bound \eqref{eq:bound_1} (with this new function class $\cG = \{g_x: x \in \cX\}$ instead of $\cK$) we conclude the lemma. 
\end{proof}

We divide the rest of the proof into few steps. Henceforth we denote $\bbL_{n, srw}$ as $\bbL_n$ for typographical simplicity. 
\noindent
\paragraph{Step 1: } Expanding the expression for $\bbL_n$ we have: 
\begin{align*}
    \bbL_n &= \frac{1}{nh^2}f^{\top}\left(I - \wt D^{-1}\wt K \right)f \\
    & = \frac{1}{nh^2}\left[\sum_i \left(1 - \frac{\wt K_{i i}}{\wt D_{i i}}\right)f(X_i)^2 - \sum_{i \neq j}\frac{\wt K_{ij}}{\wt D_{i i}}f(X_i) f(X_j) \right]
\end{align*}We first show that the diagonal terms related to the scaled weighted matrix is asymptotically negligible, i.e., 
$$
\frac{1}{nh^2} \sum_i \frac{\wt K_{i i}}{\wt D_{i i}} f^2(X_i) \overset{P}{\longrightarrow} 0 \,.
$$
By the choice of our kernel, we have $\wt K_{i i} = 1/(nh^d d_{n, h}(X_i))$. Therefore we have: 
\begin{align*}
    & \frac{1}{nh^2} \sum_i \frac{\wt K_{i i}}{\wt D_{i i}} f^2(X_i) = \frac{1}{nh^{d+2}} \times \frac1n \sum_i \frac{1}{d_{n, h}(X_i) \wt d_{n, h}(X_i)} f^2(X_i)
\end{align*}
As per our assumption $nh^{d+2} \to \infty$, hence all we need to show is the second term in the above product is $O_p(1)$ to establish the claim. Towards that direction: 
\begin{align*}
    & \left|\frac1n \sum_i \frac{f^2(X_i)}{d_{n, h}(X_i) \wt d_{n, h}(X_i)}\right| \\
    & \le \frac1n \sum_i \frac{f^2(X_i)}{p_h(X_i) \dbtilde d_h(X_i)} + \frac1n \sum_i f^2(X_i)\left|\frac{1}{d_{n, h}(X_i) \wt d_{n, h}(X_i)} - \frac{1}{p_h(X_i) \dbtilde d_h(X_i)}\right|
\end{align*}
That the first summand is $O_p(1)$ is immediate from the law of large numbers and the second term is $o_p(1)$ follows by a simple application of Lemma \ref{lem:few_bound} and Lemma \ref{lem:ph_bound}. 
\\
\paragraph{Step 2: }In the next step, we establish the following approximation of the off-diagonal terms: 
$$
\frac{1}{nh^2} \sum_{i \neq j} \frac{\wt K_{ij}}{\wt D_{i i}}f(X_i) f(X_j) = \frac{1}{nh^2} \sum_{i \neq j} \frac{\frac{\frac{1}{nh^d}K\left(\frac{\|X_i - X_j\|^2}{h^2}\right)}{\sqrt{p_h(X_i)}\sqrt{p_h(X_j)}} }{\dbtilde d_h(X_i)}f(X_i) f(X_j) + o_p(1) \,.
$$
We expand the difference as below: 
\begin{align*}
    & \left|\frac{1}{nh^2} \sum_{i \neq j} \frac{\wt K_{ij}}{\wt D_{i i}}f(X_i) f(X_j) - \frac{1}{nh^2} \sum_{i \neq j} \frac{\frac{1}{nh^d}K\left(\frac{\|X_i - X_j\|^2}{h^2}\right)}{\sqrt{p_h(X_i)}\sqrt{p_h(X_j)}\dbtilde d_h(X_i)}f(X_i) f(X_j)\right| \\
    & \le \frac{1}{nh^2} \sum_{i \neq j} \left[\frac{1}{nh^d}K\left(\frac{\|X_i - X_j\|^2}{h^2}\right)\left|f(X_i) f(X_j)\right| \times \right. \\ 
    & \qquad \qquad \qquad \left. \left| \frac{1}{\sqrt{d_{n, h}(X_i)d_{n, h}(X_j)}\tilde d_{n, h}(X_i)} - \frac{1}{\sqrt{p_h(X_i)}\sqrt{p_h(X_j)}\dbtilde d_h(X_i)}\right|\right] \\
    & \le  \frac{1}{nh^2} \sum_{i \neq j} \frac{1}{nh^d}K\left(\frac{\|X_i - X_j\|^2}{h^2}\right)\left|f(X_i) f(X_j)\right| \times \\
    & \qquad \qquad \sup_{x, y}\left| \frac{1}{\sqrt{d_{n, h}(x)d_{n, h}(y)}\tilde d_{n, h}(x)} - \frac{1}{\sqrt{p_h(x)}\sqrt{p_h(y)}\dbtilde d_h(x)}\right|
\end{align*}
Again that the second term of the above product is $o_p(1)$ follows from the bounds established in Lemma \ref{lem:few_bound} and the lower bound in Lemma \ref{lem:ph_bound}. We now show that the first term of the above product in $O_p(1)$ which will conclude the claim. 
\begin{align*}
    & \bbE\left[\frac{1}{nh^2} \sum_{i \neq j} \frac{1}{nh^d}K\left(\frac{\|X_i - X_j\|^2}{h^2}\right)\left|f(X_i) f(X_j)\right|\right] \\
    & = \frac{1}{h^2}\bbE\left[\frac{1}{h^d}K\left(\frac{\|X -Y\|^2}{h^2}\right)\left|f(X) f(Y)\right|\right] \\
    & = \frac{1}{h^2} \int_x \int_y \frac{1}{h^d} K\left(\frac{\|x - y\|^2}{h^2}\right)\left|f(x) f(y)\right| \ p(x) \ p(y) \ dx \ dy \\
    & \le \frac{f^2_{\max}}{h^2} \int_x \int_y \frac{1}{h^d} K\left(\frac{\|x - y\|^2}{h^2}\right) \ p(x) \ p(y) \ dx \ dy \\
    & = \frac{f^2_{\max}}{h^2} \int_x p(x) \int_y \frac{1}{h^d} K\left(\frac{\|x - y\|^2}{h^2}\right) p(y) \ dy \ dx \\
    & = O\left(\frac{1}{h^2}\right) \,.
\end{align*}
Similar calculation as in the proof of Lemma \ref{lem:few_bound} we have: 
$$
\sup_{x, y}\left| \frac{1}{\sqrt{d_{n, h}(x)d_{n, h}(y)}\tilde d_{n, h}(x)} - \frac{1}{\sqrt{p_h(x)}\sqrt{p_h(y)}\dbtilde d_h(x)}\right| = O_p\left(\sqrt{\frac{1}{nh^d}\log{\frac{1}{h}}}\right)
$$
Therefore we obtain: 
\begin{align*}
    & \left|\frac{1}{nh^2} \sum_{i \neq j} \frac{\wt K_{ij}}{\wt D_{i i}}f(X_i) f(X_j) - \frac{1}{nh^2} \sum_{i \neq j} \frac{\frac{1}{nh^d}K\left(\frac{\|X_i - X_j\|^2}{h^2}\right)}{\sqrt{p_h(X_i)}\sqrt{p_h(X_j)}\dbtilde d_h(X_i)}f(X_i) f(X_j)\right| \\
    & \le   \frac{1}{nh^2} \sum_{i \neq j} \frac{1}{nh^d}K\left(\frac{\|X_i - X_j\|^2}{h^2}\right)\left|f(X_i) f(X_j)\right| \times \\
    & \qquad \qquad \sup_{x, y}\left| \frac{1}{\sqrt{d_{n, h}(x)d_{n, h}(y)}\tilde d_{n, h}(x)} - \frac{1}{\sqrt{p_h(x)}\sqrt{p_h(y)}\dbtilde d_h(x)}\right| \\
    & = O_p\left(\frac{1}{h^2}\right) \times  O_p\left(\sqrt{\frac{1}{nh^d}\log{\frac{1}{h}}}\right) \\
    & = O_p\left(\sqrt{\frac{1}{nh^{d+4}}\log{\frac{1}{h}}}\right) = o_p(1)\,.
\end{align*}
\end{proof}

\paragraph{Step 3: }Based on our analysis in Step 1 and Step 2 we can write: 
$$
\bbL_n = \bbL^*_n + o_p(1)
$$
where: 
$$
\bbL^*_n = \frac{1}{nh^2}\left[\sum_if(X_i)^2 - \frac{1}{n}\sum_{i \neq j}\frac{\frac{K_h(\|X_i - X_j\|^2)}{\sqrt{p_h(X_i)p_h(X_j)}}}{\wt d_h(X_i)}f(X_i) f(X_j) \right]
$$
which can be further decomposed as the bias part and the variance part as follows: 
\begin{align*}
    \bbL^*_n & = \underbrace{\bbE[\bbL^*_n]}_{Bias} + \underbrace{\left(\bbL^*_n - \bbE[\bbL^*_n]\right) }_{Var}
\end{align*}
In this step, we show that the variance part is $o_p(1)$. Towards that end, note that: 
\begin{align*}
    & \left(\bbL^*_n - \bbE[\bbL^*_n]\right)  \\
    & = \frac{1}{h^2}\left[\frac{1}{n}\sum_{i=1}^n h^2(X_i) - \bbE[f(X)]\right] \\
    & \qquad  + \frac{1}{h^2}\left[\frac{1}{n^2}\sum_{i \neq j}\frac{\frac{K_h(\|X_i - X_j\|^2)}{\sqrt{p_h(X_i)p_h(X_j)}}}{\wt d_h(X_i)}f(X_i) f(X_j) - \frac{(n-1)}{n}\bbE\left[\frac{K_h(\|X - Y\|^2)}{\wt d_h(X)\sqrt{p_h(X)p_h(Y)}}f(X) f(Y)\right]\right] \\
    & = O_p\left(\frac{1}{h^2 \sqrt{n}}\right)  = o_p(1)\,.
\end{align*}
where the rate in the last line follows from the fact that: 
\begin{align*}
    & \var(f(X)) = O(1) \hspace{0.2in} \textrm{and}\\
    & \var\left(\frac{K_h(\|X - Y\|^2)}{\wt d_h(X)\sqrt{p_h(X)p_h(Y)}}f(X) f(Y)\right) = O(1) \,.
\end{align*}
\paragraph{Step 4: } In the last step of the proof we show that: 
$$
\bbE[\bbL^*_n] \overset{P}{\longrightarrow} \bbE[\|\nabla f(X)\|^2] \,.
$$
Towards that end, note that: 
\begin{align*}
    \bbE[\bbL^*_n] & = \frac{1}{h^2}\left[\int f^2(x) p(x) \ dx - \int \int \frac{K\left(\frac{\|x-y\|^2}{h^2}\right)}{h^d \wt d_h(x)\sqrt{p_h(x)p_h(y)}} f(x) f(y) \ p(x)p(y) \ dx dy\right] 
\end{align*}
We will use DCT to establish the convergence of the above integral. The above integral can be written as: 
$$
\bbE[\bbL_n^*] = \int_\cX g_h(x) \ p(x) \ dx 
$$
where the function $g_h(x) \equiv g_{h_n}(x)$ is defined as: 
\begin{align}
g_h(x) & = \frac{1}{h^2}\left[f^2(x) - f(x)\int_\cX \frac{K\left(\frac{\|x-y\|^2}{h^2}\right)}{h^d \wt d_h(x)\sqrt{p_h(x)p_h(y)}}  f(y) \ p(y) \ dy\right] \notag \\
\label{eq:DCT_1} & := \frac{1}{h^2}\left[f^2(x) - f(x) \int_\cX f(y) m_{h, x}(y) \ dy\right]
\end{align}
with the \emph{transformed probability density function} $m_{h, x}(\cdot)$ is defined as: 
$$
m_{h, x}(y) = \frac{K_h(\|x - y\|^2)\frac{p(y)}{\sqrt{p_h(y)}}}{\int_\cX K_h(\|x - y\|^2)\frac{p(y)}{\sqrt{p_h(y)}} \ dy } = \frac{K_h(\|x - y\|^2)\frac{p(y)}{\sqrt{p_h(y)}}}{\Lambda(x)} \,.
$$
It is proved in \cite{hein2007graph} (see main result in Section~3.3) that this sequence of functions convergence pointwise to the range of Laplacian operator, i.e., 
$$
g_h(x) \overset{\textrm{ptwise}}{\longrightarrow} -f(x)\Delta f(x) 
$$
which ensures the convergence in probability of the sequence of random variables $\{g_{h_n}(X)\}$. Therefore if we can show that the sequence $\{g_{h_n}(X)\}$ is uniformly integrable, i.e there exists some $\delta > 0$ such that: 
$$
\limsup_{n} \bbE\left[\left|g_{h_n}(X)\right|^{1+\delta}\right] < \infty
$$
then an application of Vitali's theorem yields $L_1$ convergence, i.e., 
$$
\bbE\left[g_{h_n}(X)\right] \to \bbE\left[-f(X)\Delta f(X) \right] = \bbE\left[\|\nabla f(X)\|^2\right]
$$
where the last equality is obtained by applying Green's theorem. Therefore all we need to do is to establish uniform integrability of the sequence $\{g_{h_n}(X)\}$. Note that, a two step Taylor expansion of equation \eqref{eq:DCT_1} yields: 
\begin{align*}
    & \left|\frac{1}{h^2}\left[f^2(x) - f(x) \int_\cX f(y) m_{h, x}(y) \ dy\right]\right| \\
    & = \left|\frac{1}{h^2}\left[\nabla f(x)^{\top}\int_\cX (y-x) m_{h, x}(y) \ dy + \int_\cX \frac12 (y-x)^{\top}\nabla^2f(\wt y) (y-x) \ m_{h, x}(y) \ dy \right]\right| \\
    & \le \left|\frac{1}{h}\frac{\nabla f(x)^{\top}}{\Lambda(x)}\int_{\frac{\cX - x}{h}} z K(\|z\|^2) \frac{p(x + hz)}{\sqrt{p_h(x+hz)}} \ dy\right|  \\
    & \qquad \qquad \qquad + \sup_x \|\nabla^2f(x)\|_{op} \times \frac{1}{\Lambda(x)} \int_{\frac{\cX - x}{h}} \|z\|^2 K(\|z\|^2) \frac{p(x + hz)}{\sqrt{p_h(x+hz)}} \ dy \\
    & = T_1 + T_2 
\end{align*}
Bound $T_2$ is easier, as we have already established in Lemma \ref{lem:ph_bound} that $p_h(x)$ is uniformly lower bounded on $\cX$ and $p(x)$ is uniformly upper bounded by our assumption. We now show that the lower bound on $p_h(x)$ translates to the lower bound on $\Lambda(x)$ as: 
\begin{align*}
    \Lambda(x) & = \int_\cX K_h(\|x - y\|^2)\frac{p(y)}{\sqrt{p_h(y)}} \ dy \\
    & = \int_{\frac{\cX - x}{h}} K(\|z\|^2) \frac{p(x+hz)}{\sqrt{p_h(x+hz)}} \ dy \\
    & \ge \frac{p_{\min}}{p_{\max}}\int_{\frac{\cX - x}{h}} K(\|z\|^2) \ dz \\
    & \ge \frac{p_{\min}}{p_{\max}} \times \inf_{x \in \cX} \int_{\frac{\cX - x}{h}} K(\|z\|^2) \ dz := \wt \Lambda \,.
\end{align*}
This implies that $T_2$ is upper bounded by a constant as: 
\begin{equation*}
    \label{eq:bound_T2}
    T_2 \le  \sup_x \|\nabla^2f(x)\|_{op} \times \frac{p_{\max}}{\wt p_{\min} \wt \Lambda} \int_{\reals^d} \|z\|^2 K(\|z\|^2) \ dy  \,.
\end{equation*}
Bounding $T_1$ is a bit more tricky. First we have: 
\allowdisplaybreaks
\begin{align}
    & \frac{1}{h}\int_{\frac{\cX - x}{h}} \nabla f(x)^{\top}z K(\|z\|^2) \frac{p(x + hz)}{\Lambda(x)\sqrt{p_h(x+hz)}} \ dy \notag \\
    & = \frac{1}{h}\int_{\frac{\cX - x}{h}} \nabla f(x)^{\top}z K(\|z\|^2) \ dz \notag \\
    & \qquad \qquad + \frac{1}{h}\int_{\frac{\cX - x}{h}} \nabla f(x)^{\top}z K(\|z\|^2)\left( \frac{p(x + hz)}{\Lambda(x)\sqrt{p_h(x+hz)}} - 1\right) \ dz \notag \\
    & = \frac{1}{h}\int_{\left(\frac{\cX - x}{h}\right)^c} \nabla f(x)^{\top}z K(\|z\|^2) \ dz \notag \\
    & \qquad \qquad + \frac{1}{h}\int_{\frac{\cX - x}{h}} \nabla f(x)^{\top}z K(\|z\|^2)\left( \frac{p(x + hz)}{\Lambda(x)\sqrt{p_h(x+hz)}} - 1\right) \ dz \notag \\
    & = \frac{1}{h}\int_{\left(\frac{\cX - x}{h}\right)^c} \nabla f(x)^{\top}z K(\|z\|^2) \ dz \notag \\
    & \qquad \qquad + \frac{1}{h\Lambda(x)}\int_{\frac{\cX - x}{h}} \nabla f(x)^{\top}z K(\|z\|^2)\left( \frac{p(x + hz)}{\sqrt{p_h(x+hz)}} - \Lambda(x)\right) \ dz \notag \\
    & = \frac{1}{h}\int_{\left(\frac{\cX - x}{h}\right)^c} \nabla f(x)^{\top}z K(\|z\|^2) \ dz \notag \\
    & \qquad \qquad + \frac{1}{h\Lambda(x)}\int_{\frac{\cX - x}{h}} \nabla f(x)^{\top}z K(\|z\|^2)\left( \frac{p(x + hz)}{\sqrt{p_h(x+hz)}} - \int_\cX K_h(\|x - y\|^2) \ \frac{p(y)}{\sqrt{p_h(y)}} \ dy\right) \ dz \notag \\
    & = \frac{1}{h}\int_{\left(\frac{\cX - x}{h}\right)^c} \nabla f(x)^{\top}z K(\|z\|^2) \ dz \notag \\
    & \qquad \qquad + \frac{1}{h\Lambda(x)}\int_{\frac{\cX - x}{h}} \nabla f(x)^{\top}z K(\|z\|^2)\left( \frac{p(x + hz)}{\sqrt{p_h(x+hz)}} - \int_{\frac{\cX - x}{h}} K_h(\|w\|^2) \ \frac{p(x + hw)}{\sqrt{p_h(x + hw)}} \ dy\right) \ dw \notag \\
    & = \frac{1}{h}\int_{\left(\frac{\cX - x}{h}\right)^c} \nabla f(x)^{\top}z K(\|z\|^2) \ dz \notag \\
    & \qquad \qquad + \frac{1}{h\Lambda(x)}\int_{\frac{\cX - x}{h}} \nabla f(x)^{\top}z K(\|z\|^2)\left( \frac{p(x + hz)}{\sqrt{p_h(x+hz)}}\left(1 - \int_{\frac{\cX - x}{h}} K_h(\|w\|^2) \ dw \right)\right) \ dz \notag \\
    & \qquad \qquad \qquad - \frac{1}{h\Lambda(x)}\int_{\frac{\cX - x}{h}} \nabla f(x)^{\top}z K(\|z\|^2)\int_{\frac{\cX - x}{h}} K_h(\|w\|^2) \left(\frac{p(x + hw)}{\sqrt{p_h(x + hw)}} - \frac{p(x + hz)}{\sqrt{p_h(x+hz)}}\right) \ dw \notag \\
    & = \frac{1}{h}\int_{\left(\frac{\cX - x}{h}\right)^c} \nabla f(x)^{\top}z K(\|z\|^2) \ dz \notag  \\
    & \qquad \qquad + \frac{1}{h\Lambda(x)}\int_{\left(\frac{\cX - x}{h}\right)^c} K_h(\|w\|^2) \ dw \int_{\frac{\cX - x}{h}} \nabla f(x)^{\top}z K(\|z\|^2) \frac{p(x + hz)}{\sqrt{p_h(x+hz)}}\ dz \notag \\
    \label{eq:breaking_1} & \qquad \qquad \qquad - \frac{1}{\Lambda(x)}\int_{\frac{\cX - x}{h}} \nabla f(x)^{\top}z K(\|z\|^2)\left(\int_{\frac{\cX - x}{h}} K(\|w\|^2) \left \langle w - z, \nabla \left(\frac{p}{\sqrt{p_h}}\right)(\tilde w_{x, z}) \right \rangle \ dw\right) \ dz 
\end{align}
Note that the value $\wt w_{x, z}$ is an intermediate value between $x + hz$ and $x + hw$ which can be written as $\wt w_{x, z} = x + h(\alpha z + (1-\alpha)w)$ for some $\alpha \in [0, 1]$ depending on $x, z, w$. The gradient of $p_h(x)$ is: 
\begin{align*}
    \nabla p_h(x) & = \frac{d}{dx} \int_{\reals^d}\frac{1}{(2\pi)^{d/2}h^d} e^{-\frac{1}{2h^2}\|x - y\|^2} \mathds{1}_{y \in \cX} \ p(y) \ dy \\
    & = \frac{1}{2h}\int_{\reals^d} \frac{y-x}{h} \frac{1}{(2\pi)^{d/2}h^d} e^{-\frac{1}{2h^2}\|x - y\|^2} \mathds{1}_{y \in \cX} \ p(y) \ dy \\
     & = \frac{1}{2h}\int_{\reals^d} y K(\|y\|^2) \mathds{1}_{y \in \frac{\cX - x}{h}} \ p(x + hy) \ dy \\
     & = \frac{p(x) }{2h}\int_{\reals^d} y K(\|y\|^2) \mathds{1}_{y \in \frac{\cX - x}{h}} \ dy + O(1) \\
      & := p(x) g(x) + O(1)
\end{align*}
where the $O(1)$ term is uniform over the entire region $\cX$. For the entire thing $p/\sqrt{p_h}$: 
\begin{align*}
    \nabla\left(\frac{p}{\sqrt{p_h}}\right)(x) & = p_h^{-1/2}(x) \nabla p(x) - \frac{1}{2} p(x)p_h(x)^{-3/2} \nabla p_h(x) \\
    & = \frac{1}{2} p^2(x)p_h(x)^{-3/2} g(x) + R(x)
\end{align*}
where the remainder term $R(x)$ is uniformly bounded over $\cX$. Now going back to equation \eqref{eq:breaking_1} we have: 
\begin{align*}
    T_1 & = \frac{1}{h}\int_{\left(\frac{\cX - x}{h}\right)^c} \nabla f(x)^{\top}z K(\|z\|^2) \ dz \notag  \\
    & \qquad \qquad + \frac{1}{h\Lambda(x)}\int_{\left(\frac{\cX - x}{h}\right)^c} K_h(\|w\|^2) \ dw \int_{\frac{\cX - x}{h}} \nabla f(x)^{\top}z K(\|z\|^2) \frac{p(x + hz)}{\sqrt{p_h(x+hz)}}\ dz \notag \\
     & \qquad \qquad \qquad - \frac{1}{\Lambda(x)}\int_{\frac{\cX - x}{h}} \nabla f(x)^{\top}z K(\|z\|^2)\left(\int_{\frac{\cX - x}{h}} K(\|w\|^2) \left \langle w - z, \nabla \left(\frac{p}{\sqrt{p_h}}\right)(\tilde w_{x, z}) \right \rangle \ dw\right) \ dz  \\
     & =  \frac{1}{h}\int_{\left(\frac{\cX - x}{h}\right)^c} \nabla f(x)^{\top}z K(\|z\|^2) \ dz \notag  \\
    & \qquad \qquad + \frac{1}{h\Lambda(x)}\int_{\left(\frac{\cX - x}{h}\right)^c} K_h(\|w\|^2) \ dw \int_{\frac{\cX - x}{h}} \nabla f(x)^{\top}z K(\|z\|^2) \frac{p(x + hz)}{\sqrt{p_h(x+hz)}}\ dz \notag \\
    & \qquad \qquad \qquad + \frac{1}{\Lambda(x)}\int_{\frac{\cX - x}{h}} \nabla f(x)^{\top}z K(\|z\|^2)\left(\int_{\frac{\cX - x}{h}} K(\|w\|^2) \left \langle w - z, \frac{1}{2} p^2(\wt w_{x, z})p_h(\wt w_{x, z})^{-3/2} g(\wt w_{x, z})) \right \rangle \ dw\right) \ dz \\
    & \qquad \qquad \qquad \qquad - \frac{1}{\Lambda(x)}\int_{\frac{\cX - x}{h}} \nabla f(x)^{\top}z K(\|z\|^2)\left(\int_{\frac{\cX - x}{h}} K(\|w\|^2) \left \langle w - z, R(\wt w_{x, z})\right \rangle \ dw\right) \ dz \\
     & =  \frac{1}{h}\int_{\left(\frac{\cX - x}{h}\right)^c} \nabla f(x)^{\top}z K(\|z\|^2) \ dz \notag  \\
    & \qquad \qquad + \frac{1}{h\Lambda(x)}\int_{\left(\frac{\cX - x}{h}\right)^c} K_h(\|w\|^2) \ dw \int_{\frac{\cX - x}{h}} \nabla f(x)^{\top}z K(\|z\|^2) \frac{p(x + hz)}{\sqrt{p_h(x+hz)}}\ dz \notag \\
    & \qquad \qquad \qquad + \frac{1}{\Lambda(x)}\int_{\frac{\cX - x}{h}} \nabla f(x)^{\top}z K(\|z\|^2)\left(\int_{\frac{\cX - x}{h}} K(\|w\|^2) \left \langle w - z, \frac{1}{2} p^2(\wt w_{x, z})p_h(\wt w_{x, z})^{-3/2} g(x)) \right \rangle \ dw\right) \ dz \\
    & + \frac{1}{\Lambda(x)}\int_{\frac{\cX - x}{h}} \nabla f(x)^{\top}z K(\|z\|^2)\left(\int_{\frac{\cX - x}{h}} K(\|w\|^2) \left \langle w - z, \left(\frac{1}{2} p^2(\wt w_{x, z})p_h(\wt w_{x, z})^{-3/2} g(\wt w_{x, z})) \right. \right. \right. \\
    & \qquad \qquad \qquad  - \left. \left. \left. \frac{1}{2} p^2(\wt w_{x, z})p_h(\wt w_{x, z})^{-3/2} g(x))\right)\right \rangle \ dw\right) dz \\
    & \qquad \qquad \qquad \qquad \qquad - \frac{1}{\Lambda(x)}\int_{\frac{\cX - x}{h}} \nabla f(x)^{\top}z K(\|z\|^2)\left(\int_{\frac{\cX - x}{h}} K(\|w\|^2) \left \langle w - z, R(\wt w_{x, z})\right \rangle \ dw\right) \ dz \\
    & := T_{11} + T_{12} + T_{13} + T_{14} + T_{15} \\ \,.
\end{align*}
Therefore the function $g_h(x)$ is bouned by: 
\begin{equation}
    \label{eq:gh_bound}
    |g_{h_n}(x)| \le T_{11, n} + T_{12, n} + T_{13, n} + T_{14, n} + T_{15,n} + T_{2, n} \,.
\end{equation}
We next prove that the collection of functions $\{g_{h_n}(x)\}_n$ is uniformly integrable, for which it is enough to show that each term on the above bound of $|g_h|$ is uniformly integrable. We have already established in equation \eqref{eq:bound_T2} that $T_2$ is uniformly bounded by a constant hence U.I. As for $T_{15}$, we already know that the function $R(x)$ is uniformly bounded, which immediately implies: 
\begin{align*}
    |T_{15, n}(x)| & = \left|\frac{1}{\Lambda(x)}\int_{\frac{\cX - x}{h}} \nabla f(x)^{\top}z K(\|z\|^2)\left(\int_{\frac{\cX - x}{h}} K(\|w\|^2) \left \langle w - z, R(\wt w_{x, z})\right \rangle \ dw\right) \ dz \right| \\
    & \le \left(\sup_{z \in \cX} \|R(z)\|\right) \times \|\nabla f(x)\| \int_{\frac{\cX - x}{h}} \|z\|K(\|z\|^2) \int_{\frac{\cX - x}{h}} \|w - z\|K(\|w\|^2) \ dw \ dz \\
    & \le C_5 \,.
\end{align*}
and consequently $\{T_{15, n}\}$ is U.I. The other parts need a more involved calculations. We start with $T_{11, n}$: 
\allowdisplaybreaks
\begin{align*}
\bbE[T_{11, n}^{1 + \delta}] & = \bbE_X\left[\left(\|\nabla f(x)\|  \times \frac{1}{h}\int_{\left(\frac{\cX - x}{h}\right)^c}\|z\| K(\|z\|^2) \ dz\right)^{1 + \delta}\right] \\
    & = \bbE_X\left[\left(\|\nabla f(x)\|  \times \frac{1}{h}\bbE_Z\left[\|Z\|\mathds{1}_{X + Zh \notin \cX}\right]\right)^{1 + \delta}\right] \\
    & = \bbE_X\left[\left(\|\nabla f(x)\|  \times \frac{1}{h}\bbE_Z\left[\|Z\|\mathds{1}_{X + Zh \notin \cX, \|Z\| \le 2\sqrt{\log{\frac{1}{h}}}}\right]\right)^{1 + \delta}\right] \\
    & \qquad \qquad + \bbE_X\left[\left(\|\nabla f(x)\|  \times \frac{1}{h}\bbE_Z\left[\|Z\|\mathds{1}_{X + Zh \notin \cX,  \|Z\| > 2\sqrt{\log{\frac{1}{h}}}}\right]\right)^{1 + \delta}\right] \\
     & \le \bbE_X\left[\left(\|\nabla f(x)\|  \times \frac{1}{h}\bbE_Z\left[\|Z\|\mathds{1}_{X + Zh \notin \cX, \|Z\| \le 2\sqrt{\log{\frac{1}{h}}}}\right]\right)^{1 + \delta}\right] \\
    & \qquad \qquad + \bbE_X\left[\left(\|\nabla f(x)\|  \times \frac{1}{h}\bbE_Z\left[\|Z\|\mathds{1}_{\|Z\| > 2\sqrt{\log{\frac{1}{h}}}}\right]\right)^{1 + \delta}\right] \\
    & \le \bbE_X\left[\left(\|\nabla f(x)\|  \times \frac{1}{h}\bbE_Z\left[\|Z\|\mathds{1}_{X + Zh \notin \cX, \|Z\| \le 2\sqrt{\log{\frac{1}{h}}}}\right]\right)^{1 + \delta}\right] \\
    & \qquad \qquad + \bbE_X\left[\left(\|\nabla f(x)\|  \times \frac{1}{h}e^{-\log{\frac{1}{h}}}\right)^{1 + \delta}\right] \\
    & \le \bbE_X\left[\left(\|\nabla f(x)\|  \times \frac{1}{h}\bbE_Z\left[\|Z\|\mathds{1}_{X + Zh \notin \cX, \|Z\| \le 2\sqrt{\log{\frac{1}{h}}}}\right]\right)^{1 + \delta}\right] + C_1\\
    & = \bbE_X\left[\left(\bbE_Z\left[\frac{\|\nabla f(x)\|  }{h}\|Z\|\mathds{1}_{X + Zh \notin \cX,  \|Z\| \le 2\sqrt{\log{\frac{1}{h}}}}\right]\right)^{1 + \delta}\right] + C_1 \\ 
    & \le \bbE_X\left[\left(\frac{\|\nabla f(x)\|}{h}\right)^{1 + \delta}\bbE_Z\left[\|Z\|^{1+\delta}\mathds{1}_{X + Zh \notin \cX,  \|Z\| \le 2\sqrt{\log{\frac{1}{h}}}}\right]\right] + C_1\\
    & = \bbE_Z\left[\frac{\|Z\|^{1+\delta}}{h}\bbE_X\left[\left(\frac{\|\nabla f(x)\|^{1 + \delta}}{h^{\delta}}\right)\mathds{1}_{X + Zh \notin \cX,  \|Z\| \le 2\sqrt{\log{\frac{1}{h}}}}\right]\right] + C_1 \\
    & \le \sup_{x: d_{b, \cX}(x) \le h\sqrt{\log{\frac{1}{h}}}}\left(\frac{\|\nabla f(x)\|^{1 + \delta}}{h^{\delta}}\right) \times \bbE_Z\left[\frac{\|Z\|^{1+\delta}}{h}\bbE_X\left[\mathds{1}_{X + Zh \notin \cX,  \|Z\| \le 2\sqrt{\log{\frac{1}{h}}}}\right]\right] + C_1 \\
    & = \sup_{x: d_{b, \cX}(x) \le h\sqrt{\log{\frac{1}{h}}}}\left(\frac{\|\nabla f(x)\|^{1 + \delta}}{h^{\delta}}\right) \times \bbE_Z\left[\frac{\|Z\|^{1+\delta}}{h}P\left(X + Zh \notin \cX\right)\mathds{1}_{\|Z\| \le 2\sqrt{\log{\frac{1}{h}}}}\right] + C_1\\
    & = \sup_{x: d_{b, \cX}(x) \le h\sqrt{\log{\frac{1}{h}}}}\left(\frac{\|\nabla f(x)\|^{1 + \delta}}{h^{\delta}}\right) \times \bbE_Z\left[\frac{\|Z\|^{1+\delta}}{h}h\|Z\|\frac{P\left(X + Zh \notin \cX\right)}{h\|Z\|}\mathds{1}_{h\|Z\| \le 2h\sqrt{\log{\frac{1}{h}}}}\right] + C_1 \\
    & \le \sup_{x: d_{b, \cX}(x) \le h\sqrt{\log{\frac{1}{h}}}}\left(\frac{\|\nabla f(x)\|^{1 + \delta}}{h^{\delta}}\right) \times \sup_{\|t\| \le 2h\sqrt{\log{\frac{1}{h}}}}\frac{P\left(X + t \notin \cX\right)}{t} \times \bbE[\|Z\|^{2 +\delta}] + C_1 \\
    & \le C_1 + C_2 \,.
\end{align*}
Therefore we can establish the sequence $\{T_{11, n}\}$ is U.I. provided that: 
\begin{equation*}
    \sup_{x: d_{b, \cX}(x) \le h\sqrt{\log{\frac{1}{h}}}}\left(\frac{\|\nabla f(x)\|^{1 + \delta}}{h^{\delta}}\right) = O(1)
\end{equation*}
for some $\delta > 0$ and 
\begin{equation*}
\sup_{\|t\| \le 2h\sqrt{\log{\frac{1}{h}}}}\frac{P\left(X + t \notin \cX\right)}{\|t\|} = O(1) \,.
\end{equation*}
The first condition follows immediately from our assumption $\nabla f(x) = 0$ at the boundary of $\cX$ and the second condition follows from our assumption $p(x)$ is uniformly lower bounded on $\cX$. To show that the other sequence $\{T_{12, n}\}$ is U.I. fix a small $\delta > 0$ constant $L$ such that $L^2 \ge 2(1 + \delta)$. We have: 
\begin{align*} 
    \bbE_X[|T^{1+\delta}_{12, n}|] & = \bbE_X\left[\left|\frac{1}{h\Lambda(x)}\int_{\left(\frac{\cX - x}{h}\right)^c} K_h(\|w\|^2) \ dw \int_{\frac{\cX - x}{h}} \nabla f(x)^{\top}z K(\|z\|^2) \frac{p(x + hz)}{\sqrt{p_h(x+hz)}}\ dz \right|^{1 + \delta}\right] \\
    & \le \left(\frac{p_{\max}}{\wt \Lambda \wt p_{\min}}\right)^{1 + \delta} \bbE_X\left[\frac{\|\nabla f(x)\|^{1+\delta}}{h^{1+\delta}}\left(\bbE_Z\left[\mathds{1}_{X + hZ \notin \cX}\right]\right)^{1+\delta}\right] \\
    & \le \left(\frac{p_{\max}}{\wt \Lambda \wt p_{\min}}\right)^{1 + \delta} \bbE_X\left[\frac{\|\nabla f(x)\|^{1+\delta}}{h^{1+\delta}}\bbE_Z\left[\mathds{1}_{X + hZ \notin \cX}\right]\right] \hspace{0.2in} [\textrm{Jensen's inequality}] \\
    & = \left(\frac{p_{\max}}{\wt \Lambda \wt p_{\min}}\right)^{1 + \delta} \left[\bbE_X\left[\frac{\|\nabla f(x)\|^{1+\delta}}{h^{1+\delta}}\bbE_Z\left[\mathds{1}_{X + hZ \notin \cX, \  \|Z\| \le L\sqrt{\log{\frac{1}{h}}}}\right]\right]  \right. \\
    & \qquad \qquad \qquad \qquad \qquad + \left. \bbE_X\left[\frac{\|\nabla f(x)\|^{1+\delta}}{h^{1+\delta}}\bbE_Z\left[\mathds{1}_{X + hZ \notin \cX, \  \|Z\| > L\sqrt{\log{\frac{1}{h}}}}\right]\right]  \right] \\
    & \le \left(\frac{p_{\max}}{\wt \Lambda \wt p_{\min}}\right)^{1 + \delta} \left[\bbE_X\left[\frac{\|\nabla f(x)\|^{1+\delta}}{h^{1+\delta}}\bbE_Z\left[\mathds{1}_{X + hZ \notin \cX, \  \|Z\| \le L\sqrt{\log{\frac{1}{h}}}}\right]\right] \right. \\
    & \qquad \qquad \qquad \qquad \qquad + \left.  \bbE_X\left[\frac{\|\nabla f(x)\|^{1+\delta}}{h^{1+\delta}}\bbE_Z\left[\mathds{1}_{\|Z\| > L\sqrt{\log{\frac{1}{h}}}}\right]\right]  \right]  \\
    & \le \left(\frac{p_{\max}}{\wt \Lambda \wt p_{\min}}\right)^{1 + \delta} \left[\bbE_Z\left[\bbE_X\left[\frac{\|\nabla f(X)\|^{1+\delta}}{h^{1+\delta}} \mathds{1}_{X + hZ \notin \cX}\mathds{1}_{\|Z\| \le L\sqrt{\log{\frac{1}{h}}}}\right]\right] \right. \\
    & \qquad \qquad \qquad \qquad \qquad + \left.  \frac{h^{L^2/2}}{h^{1+\delta}}\bbE_X\left[\|\nabla f(X)\|^{1+\delta}\right]  \right]  \\
    & \le \left(\frac{p_{\max}}{\wt \Lambda \wt p_{\min}}\right)^{1 + \delta} \times \sup_{x: d_b(x) \le Lh\sqrt{1/h}} \frac{\|\nabla f(x)\|^{1+\delta}}{h^{\delta}} \\
    & \qquad \qquad \qquad \qquad \times \bbE_Z\left[\frac{1}{h}\bbP_X\left(X + hZ \notin \cX\right)\mathds{1}_{\|Z\| \le L\sqrt{\log{\frac{1}{h}}}}\right] + O(1) \\
    & \le \left(\frac{p_{\max}}{\wt \Lambda \wt p_{\min}}\right)^{1 + \delta} \times \sup_{x: d_b(x) \le Lh\sqrt{1/h}} \frac{\|\nabla f(x)\|^{1+\delta}}{h^{\delta}} \\
    & \qquad \qquad \times \sup_{\|t\| \le Lh\sqrt{\log{\frac{1}{h}}}}\frac{P\left(X + t \notin \cX\right)}{\|t\|} \times \bbE\left[\|Z\|\mathds{1}_{\|Z\| \le L\sqrt{\log{\frac{1}{h}}}}\right] + O(1) \\
    & \le C_{12} \,.
\end{align*}
Now for $\{T_{13, n}\}$: 
\begin{align*}
    \bbE[|T_{13, n}|^{1 + \delta}] & = \bbE\left[\left|\frac{1}{\Lambda(x)}\int_{\frac{\cX - x}{h}} \nabla f(x)^{\top}z K(\|z\|^2)\left(\int_{\frac{\cX - x}{h}} K(\|w\|^2) \left \langle w - z, \right. \right. \right. \right. \\ 
    & \qquad \qquad \qquad \qquad \left. \left. \left. \left. \frac{1}{2} p^2(\wt w_{x, z})p_h(\wt w_{x, z})^{-3/2} g(x)) \right \rangle \ dw\right) \ dz\right|^{1 + \delta}\right] \\
    & = \left(\frac{p^2_{\max}}{\wt \Lambda \wt p_{\min}^{3/2}}  \int_{\reals^d} \int_{\reals^d} \|z\|\|w  - z\|K(\|z\|^2)K(\|w\|^2) \ dz dw  \right)^{1 +\delta} \\
    & \qquad \qquad \times \bbE_X\left[\|\nabla f(X)\|^{1+\delta}\|g(X)\|^{1+\delta}\right] \\
    & = \left(\frac{p^2_{\max}}{\wt \Lambda \wt p_{\min}^{3/2}}  \int_{\reals^d} \int_{\reals^d} \|z\|\|w  - z\|K(\|z\|^2)K(\|w\|^2) \ dz dw  \right)^{1 +\delta} \\
    & \qquad \qquad \times \bbE\left[\left\|\frac{\|\nabla f(X)\| }{2h}\int_{\reals^d} y K(\|y\|^2) \mathds{1}_{y \in \frac{\cX - x}{h}} \ dy\right\|^{1+\delta}\right]
\end{align*}
As can be seen the rest of the proof is analogous to that of bounding $\bbE\left[|T_{11,n}|^{1 + \delta}\right]$ and hence skipped for brevity. Finally for showing $\{T_{14, n}\}$ is U.I.: 
\begin{align*}
    \bbE[|T_{14, n}|^{1+\delta}] & = \bbE\left[\left|\frac{1}{\Lambda(x)}\int_{\frac{\cX - x}{h}} \nabla f(x)^{\top}z K(\|z\|^2)\left(\int_{\frac{\cX - x}{h}} K(\|w\|^2) \left \langle w - z, \left(\frac{1}{2} p^2(\wt w_{x, z})p_h(\wt w_{x, z})^{-3/2} g(\wt w_{x, z})) \right. \right. \right. \right. \right. \\
    & \qquad \qquad \qquad  - \left. \left. \left. \left. \left. \frac{1}{2} p^2(\wt w_{x, z})p_h(\wt w_{x, z})^{-3/2} g(x))\right)\right \rangle \ dw\right) dz\right|^{1+\delta}\right] \\
    & \le \left(\frac{p_{\max}}{2\wt \Lambda \wt p^{3/2}_{\min}}\right)^{1 + \delta} \bbE_X\left[\left|\int_{\frac{\cX - x}{h}}\int_{\frac{\cX - x}{h}} \|\nabla f(X)\| \|z\|\|w - z \|K(\|z\|^2) K(\|w\|^2)\|g(\wt w_{x, z})) - g(x)\|\right|^{1+\delta}\right] \\
     & \le \left(\frac{p_{\max}}{2\wt \Lambda \wt p^{3/2}_{\min}}\right)^{1 + \delta} \bbE_X\left[\|\nabla f(X)\|^{1+\delta }\left|\bbE_{Z_1, Z_2} \left[\|Z_1\|\|Z_1 - Z_2\| \bbE_{Z_3}\left[\frac{1}{2h}\|Z_3\|\left(\mathds{1}_{Z_3 \in  \left(\frac{\cX - x}{h}\right)^c} \right. \right. \right. \right. \right. \\
     & \qquad \qquad \qquad \left. \left. \left. \left. \left. - \mathds{1}_{Z_3 \in \left(\frac{\cX - x - h(\alpha Z_1 + (1-\alpha)Z_2)}{h}\right)^c}\right)\right]\right]\mathds{1}_{Z_1, Z_2 \in \frac{\cX - x}{h}}\right|^{1+\delta}\right] \\
     & \le \left(\frac{p_{\max}}{2\wt \Lambda \wt p^{3/2}_{\min}}\right)^{1 + \delta}  \left|\bbE_{Z_1, Z_2}\left(\|Z_1\|(\|Z_1 z_2\|)\right)\right|^{1+\delta} \bbE_X\left[\|\nabla f(X)\|^{1+\delta }\left| \bbE_{Z_3}\left[\frac{1}{2h}\|Z_3\|\left(\mathds{1}_{Z_3 \in  \left(\frac{\cX - x}{h}\right)^c} \right) \right]\right|^{1+\delta}\right] \\
     & \qquad \qquad + \left(\frac{p_{\max}}{2\wt \Lambda \wt p^{3/2}_{\min}}\right)^{1 + \delta} \bbE_X\left[\|\nabla f(X)\|^{1+\delta }\left|\bbE_{Z_1, Z_2} \left[\|Z_1\|\|Z_1 - Z_2\|  \right. \right. \right.  \\
     & \qquad \qquad \qquad  \left. \left. \left.  \bbE_{Z_3}\left[\frac{1}{2h}\|Z_3\| \mathds{1}_{Z_3 \in \left(\frac{\cX - x - h(\alpha Z_1 + (1-\alpha)Z_2)}{h}\right)^c}\right]\right]\mathds{1}_{Z_1, Z_2 \in \frac{\cX - x}{h}}\right|^{1+\delta}\right] \\
     & \le T_{141} + T_{142} \,.
\end{align*}
Bounding $T_{141}$ is again follows from similar calculation as we used to bound $\bbE\left[|T_{11, n}|^{1+\delta}\right]$ and hence skipped. Now to bound $T_{142}$: \begin{align*}
    T_{142} & = \left(\frac{p_{\max}}{2\wt \Lambda \wt p^{3/2}_{\min}}\right)^{1 + \delta} \bbE_X\left[\|\nabla f(X)\|^{1+\delta }\left|\bbE_{Z_1, Z_2} \left[\|Z_1\|\|Z_1 - Z_2\|  \right. \right. \right.  \\
     & \qquad \qquad \qquad  \left. \left. \left.  \bbE_{Z_3}\left[\frac{1}{2h}\|Z_3\| \mathds{1}_{Z_3 \in \left(\frac{\cX - x - h(\alpha Z_1 + (1-\alpha)Z_2)}{h}\right)^c}\right]\right]\mathds{1}_{Z_1, Z_2 \in \frac{\cX - x}{h}}\right|^{1+\delta}\right] \\
     & \le \left(\frac{p_{\max}}{2\wt \Lambda \wt p^{3/2}_{\min}}\right)^{1 + \delta} \bbE_X\left[\|\nabla f(X)\|^{1+\delta }\bbE_{Z_1, Z_2} \left[\|Z_1\|^{1+\delta}\|Z_1 - Z_2\|^{1+\delta}  \right. \right.  \\
     & \qquad \qquad \qquad  \left. \left.  \left|\bbE_{Z_3}\left[\frac{1}{2h}\|Z_3\| \mathds{1}_{Z_3 \in \left(\frac{\cX - x - h(\alpha Z_1 + (1-\alpha)Z_2)}{h}\right)^c}\right]\right]\right|^{1+\delta}\mathds{1}_{Z_1, Z_2 \in \frac{\cX - x}{h}}\right] \\
     & \le \left(\frac{p_{\max}}{2\wt \Lambda \wt p^{3/2}_{\min}}\right)^{1 + \delta} \bbE_{X, Z_1, Z_2, Z_3}\left[\|\nabla f(X)\|^{1+\delta }\|Z_1\|^{1+\delta}\|Z_1 - Z_2\|^{1+\delta} \right. \\
     & \qquad \qquad \qquad \times \left. \frac{1}{(2h)^{1+\delta}}\|Z_3\|^{1+\delta} \mathds{1}_{X + h\left(Z_3 + \alpha Z_1 + (1 - \alpha)Z_2\right) \notin \cX}\mathds{1}_{Z_1, Z_2 \in \frac{\cX - x}{h},\|Z_1\| \vee \|Z_2\| \vee \|Z_3\| \le \sqrt{\log{\frac{1}{h}}} }\right] \\
     & + \left(\frac{p_{\max}}{2\wt \Lambda \wt p^{3/2}_{\min}}\right)^{1 + \delta} \bbE_{X, Z_1, Z_2, Z_3}\left[\|\nabla f(X)\|^{1+\delta }\|Z_1\|^{1+\delta}\|Z_1 - Z_2\|^{1+\delta} \right. \\
     & \qquad \qquad \qquad \times \left. \frac{1}{(2h)^{1+\delta}}\|Z_3\|^{1+\delta} \mathds{1}_{X + h\left(Z_3 + \alpha Z_1 + (1 - \alpha)Z_2\right) \notin \cX}\mathds{1}_{Z_1, Z_2 \in \frac{\cX - x}{h},\|Z_1\| \vee \|Z_2\| \vee \|Z_3\| > \sqrt{\log{\frac{1}{h}}} }\right] \\
\end{align*}
Now it is bounded via similar argument used to bound $\bbE[|T_{11, n}|^{1+\delta}]$ and hence skipped. Therefore we have established all the terms in the bound of $g_{h_n}(x)$ in equation \eqref{eq:gh_bound} is uniformly integrable which further implies that the sequence of functions $\{g_{h_n}(x)\}$ is uniformly integrable, which concludes the proof.

\subsection{Proof of Theorem \ref{thm:KL_laplacian}}
\begin{proof}
The penalized objective using KL divergence on the probability space for unnormalized graph Laplacian can be written as: 
$$
g(y_1, \dots, y_n) = \sum_{i=1}^n \KL\left(P_{y_i} \vert \vert P_{\hat y_i}\right) + \frac{\lambda}{2}\sum_{i \neq j}W_{ij} \KL\left(P_{y_i} \vert \vert P_{y_j}\right) \,.
$$
where $y$ is the matrix with rows being $y_1, \dots, y_n$. Note that the probability vector $y_i$ can be written as: 
$$
y_i = \left[\frac{e^{o_{i 1}}}{\sum_{j=1}^k e^{o_{ij}}}, \frac{e^{o_{i 2}}}{\sum_{j=1}^k e^{o_{ij}}}, \dots, \frac{e^{o_{i k}}}{\sum_{j=1}^k e^{o_{ij}}}\right]
$$
with $o_i$ being the output of the penultimate layer of the neural network and $P_{y_i}$ is the k-class multinomial distribution with probabilities specified by $y_i$. For the rest of the analysis, define $\eta_i$ (resp. $\hat \eta_i$) $\in \reals^{K-1}$ to be the natural parameter corresponding to $y_i$, i.e., $\eta_{ij} = \log{(y_{ij}/y_{i k})} = o_{ij} - o_{i ,k}$. The multinomial p.m.f. is of the form: 
$$
f_{y_i}(\bx) = \Pi_{j=1}^k y_{ij}^{x_j} = e^{\sum_{j=1}^{k-1} x_j \eta_{ij} - \log{(1 + \sum_{j=1}^{k-1}e^{\eta_{ij}})}} := e^{\sum_{j=1}^{k-1} x_j \eta_{ij} - A(\eta_i)}
$$
Also, from the properties of the distributions from exponential family, we know $\bbE_{X \sim P_{y_i}}[X] = \nabla A(\eta_i)$. For any $i, j$ the KL divergence between $P_{y_i}$ and $P_{y_j}$ is: 
\begin{align*}
    KL\left(P_{y_i} \vert \vert P_{y_j}\right) & = \int_{\cX} \log{\frac{f_{y_i}(x)}{f_{y_j}(x)}} f_{y_i}(x) \ d\mu(x) \\
    & = \int_{\cX} \left\{(\eta_i - \eta_j)^{\top}x - \left(A(\eta_i) - A(\eta_j)\right)\right\} \ f_{y_i}(x) \ d\mu(x) \\
    & = (\eta_i - \eta_j)^{\top}\bbE_{X \sim P_{y_i}}[X] - \left(A(\eta_i) - A(\eta_j)\right) \\
    & = (\eta_i - \eta_j)^{\top}\nabla A(\eta_i) - \left(A(\eta_i) - A(\eta_j)\right) \\
    & = d_A\left(\eta_j, \eta_i\right) \,.
\end{align*}
where $d_A$ is the Bregman divergence with respect to $A$ and $\mu$ is the counting measure as we are dealing with discrete random variable. Now consider the case when we want to minimize the following objective function: 
$$
\hat \theta = \argmin_\theta \sum_{i=1}^n \omega_i KL(P_{\theta} \vert \vert P_{\theta_i})
$$
We can minimize above \emph{barycenter} problem with respect to the natural parameters and then transform it back to the original parameter. To be precise, first we solve:
$$
\hat \eta = \argmin_{\eta} \sum_{i=1}^n \omega_i \ d_A\left(\eta_i, \eta\right)
$$
then transform $\hat \eta$ to $\hat \theta$. Then $\hat \eta$ satisfies the following first order condition:
\begin{align*}
     & \left. \frac{d}{d\eta} \sum_{i=1}^n \omega_i \ d_A\left(\eta_i, \eta\right) \right \vert_{\eta = \hat \eta} = 0 \\
    \implies & \left. \frac{d}{d\eta} \sum_{i=1}^n \omega_i \left[A(\eta_i) - A(\eta) - \nabla A(\eta)^{\top}(\eta_i - \eta)\right] \right \vert_{\eta = \hat \eta} = 0 \\
    \implies & \sum_{i=1}^n \omega_i\left[-\nabla A(\hat \eta) + \nabla A(\hat \eta) - \nabla^2 A(\hat \eta)\left(\eta_i - \hat \eta\right)\right] = 0 \\
    \implies & - \nabla^2 A(\hat \eta)\sum_{i=1}^n \omega_i\left(\eta_i - \hat \eta\right) = 0 \\
    \implies & \sum_{i=1}^n \omega_i\left(\eta_i - \hat \eta\right) = 0 \implies \hat \eta = \left(\frac{ \sum_{i=1}^n \omega_i\eta_i}{\sum_{i=1}^n \omega_i} \right) \hspace{0.2in} [\because \nabla^2A(\eta) \succ 0 \textrm{ on the domain }]\,.
\end{align*}
The same optimal solution can be found via minimizing the following quadratic problem: 
$$
\hat \eta = \argmin_{\eta} \frac12 \sum_{i=1}^n \omega_i\left\|\eta - \eta_i\right\|^2
$$
Hence for each i, fixing $y_j$ for $j \neq i$ our update step is:
\begin{align*}
    \tilde \eta_i & \leftarrow \argmin_{\eta} \ \left[\|\eta - \hat \eta_i\|^2 + \sum_{j \neq i} W_{ij}\|\eta -  \eta_j\|^2\right] \\
    y_i & \leftarrow \left[\frac{e^{\tilde \eta_{i 1}}}{1 + \sum_{j=1}^K e^{\tilde \eta_{i k}}}, \dots, \frac{e^{\tilde \eta_{i K-1}}}{1 + \sum_{j=1}^K e^{\tilde \eta_{i k}}}, \frac{1}{1 + \sum_{j=1}^K e^{\tilde \eta_{i k}}}\right] \,.
\end{align*}
\end{proof}

\subsection{Extension of Theorem \ref{thm:KL_laplacian}}
\label{sec:general_Bregman}
The following theorem extends the result of Theorem \ref{thm:KL_laplacian} to general Bregman divergence function: 
\begin{theorem}
\label{thm:gen_Breg}
Suppose $\tilde y_i$ is the minimizer of the following objective function: 
$$
\tilde y_i = \argmin_{y}\left\{D_F(y, \tilde y_i) + \sum_{j \neq i} W_{i,j}D_F(y, \tilde y_j)\right\} \,.
$$
Then $\tilde y_i$ is also minimizer of the following squared error loss: 
$$
\tilde y_i =  \argmin_{y} \ \left[\|y - \hat y_i\|^2 + \sum_{j \neq i} W_{ij}\|y -  \tilde y_j\|^2\right]
$$
\end{theorem}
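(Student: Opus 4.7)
The plan is to reduce both optimization problems to their first-order conditions and observe that, under strict convexity of the Bregman generator $F$, these conditions coincide and pin down the same unique minimizer. The calculation is a direct analogue of the argument used in Theorem~\ref{thm:KL_laplacian}, executed this time at the level of a general Bregman divergence rather than by first passing through natural parameters.

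First I would compute the gradient of a single term $D_F(z,y)=F(z)-F(y)-\langle\nabla F(y),z-y\rangle$ with respect to the variable slot. Differentiating term by term and using the product rule on the inner product, the two contributions involving $\nabla F(y)$ cancel, leaving $\nabla_y D_F(z,y)=-\nabla^2 F(y)(z-y)$. Next I would set the gradient of the full Bregman objective to zero at $y=\tilde y_i$; assembling the contributions from the data term $D_F(\hat y_i,\tilde y_i)$ and each penalty term $W_{ij}\,D_F(\tilde y_j,\tilde y_i)$ gives
$$
\nabla^2 F(\tilde y_i)\Bigl[\bigl(1+\textstyle\sum_{j\neq i}W_{ij}\bigr)\tilde y_i \,-\, \hat y_i - \sum_{j\neq i}W_{ij}\,\tilde y_j\Bigr]=0.
$$
Because $F$ is strictly convex on its domain, $\nabla^2 F(\tilde y_i)\succ 0$, so the bracketed vector must vanish and hence $\tilde y_i$ equals the weighted arithmetic mean $\bigl(\hat y_i+\sum_{j\neq i}W_{ij}\,\tilde y_j\bigr)\big/\bigl(1+\sum_{j\neq i}W_{ij}\bigr)$.

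To conclude, I would observe that the squared-error objective $\|y-\hat y_i\|^2+\sum_{j\neq i}W_{ij}\|y-\tilde y_j\|^2$ is strongly convex in $y$, and its first-order condition is $\bigl(1+\sum_{j\neq i}W_{ij}\bigr)y = \hat y_i+\sum_{j\neq i}W_{ij}\,\tilde y_j$ — the very same weighted-mean formula derived above. Uniqueness of the minimizers of both strongly convex problems then identifies them.

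The only substantive point requiring care — and the step most prone to subtle error — is keeping track of which slot of $D_F$ is being differentiated: the clean weighted-arithmetic-mean FOC appears precisely because the variable of optimization occupies the \emph{second} slot of each Bregman divergence, in keeping with the exponential-family identity $\KL(P_y\|P_{y_i})=d_A(\eta_i,\eta_y)$ used in Theorem~\ref{thm:KL_laplacian}, where $\eta_y$ sits in the second slot of $d_A$. Were the variable instead in the first slot, the FOC would read $\nabla F(\tilde y_i)=$ weighted mean of the $\nabla F(\cdot)$'s, and the equivalence with the squared-error problem would only hold in the dual coordinates $\eta=\nabla F(y)$ rather than in the primal $y$-coordinates as stated.
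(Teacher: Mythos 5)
Your proof is correct and follows essentially the same route as the paper's: write down the first-order condition, use positive definiteness of $\nabla^2 F$ to extract the weighted arithmetic mean $\tilde y_i=\bigl(\hat y_i+\sum_{j\neq i}W_{ij}\tilde y_j\bigr)/\bigl(1+\sum_{j\neq i}W_{ij}\bigr)$, and observe that this is also the first-order condition of the strongly convex squared-error problem. Your closing caveat about which slot of $D_F$ carries the optimization variable is not pedantry; it pinpoints the one place where the statement as written is off. With the paper's own definition $D_F(x,y)=F(x)-F(y)-\langle x-y,\nabla F(y)\rangle$ and the variable placed in the \emph{first} slot, the first-order condition is $\nabla F(\tilde y_i)=\bigl(\nabla F(\hat y_i)+\sum_{j\neq i}W_{ij}\nabla F(\tilde y_j)\bigr)/\bigl(1+\sum_{j\neq i}W_{ij}\bigr)$, i.e., an arithmetic mean only in the dual coordinates $\nabla F(y)$ (for negative entropy this is a normalized geometric mean in primal coordinates, not the arithmetic mean). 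The primal arithmetic mean, and hence the equivalence with the squared-error objective, requires the variable to occupy the \emph{second} slot, exactly as in the identity $d_A(\eta_j,\eta_i)$ for the KL divergence $P_{y_i}\,\Vert\,P_{y_j}$ that drives Theorem~\ref{thm:KL_laplacian}. The paper's proof defers to that KL calculation without making this argument swap explicit, so your version --- which records $\nabla_y D_F(z,y)=-\nabla^2F(y)(z-y)$ and differentiates the second slot --- is the careful reading under which the theorem is true. (You also silently corrected a typo: the data-fidelity term in the stated objective should involve $\hat y_i$, not $\tilde y_i$.)
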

\begin{proof}
The proof of quite similar to that of Theorem \ref{thm:KL_laplacian}. Recall that for a convex function $F$, the Bregman divergence is defined as: 
$$
D_F(x, y) = F(x) - F(y) - \left \langle x-y, \nabla F(y) \right \rangle \,.
$$
KL divergence is a special case of Bregman divergence when $F(x) = \sum_i x_i \log{x_i}$. As it is evident from the proof of Theorem \ref{thm:KL_laplacian}, minimizing KL divergence becomes equivalent to minimizing the weighted combination of Bregman divergence with respect to log partition function $A$. Now consider a general form equation \eqref{eq:kl_coo_2}: 
$$
\left(\tilde y_1, \dots, \tilde y_n\right) = \argmin_{y_1, \dots, y_n}\left[\sum_{i=1}^n \left\{D_F(y_i, \hat y_i) + \sum_{j \neq i} W_{i, j} D_F(y_i, y_j)\right\}\right]
$$
In our co-ordinate descent algorithm, for a fixed $i$ we solve:
\begin{align*}
\tilde y_i & = \argmin_{y}\left\{D_F(y, \hat y_i) + \sum_{j \neq i} W_{i, j} D_F(y , \tilde y_j)\right\}  \\
& \triangleq \argmin_{y}\left\{\sum_{j =1}^n \omega_j (y, z_j)\right\}
\end{align*}
where $z_j = \tilde y_j$ for $j \neq i$ and $z_i = \hat y_i$, and for the weights $\omega_j= W_{i, j}$ for $j \neq i$, $\omega_i = 1$. It follows via similar calculation as in the proof of Theorem \ref{thm:KL_laplacian} that: 
$$
\tilde y_i = \frac{\sum_j \omega_j z_j}{\sum_j \omega_j}
$$
which, as argued before is the solution of the following quadratic optimization problem: 
$$
\tilde y_i = \argmin_{y} \ \left[\|y - \hat y_i\|^2 + \sum_{j \neq i} W_{ij}\|y -  \tilde y_j\|^2\right] \,.
$$
This completes the proof. 
\end{proof}

\end{document}